\newif\ifarxiv % -- choose preprint (arxiv) version versus conference format
\newif\ifreview % -- turn on and off author info, page+line numbers and appendix proofs
\newif\iffinal % -- turn on and off colored comments
\newtheorem{theorem}             {Theorem}
\newtheorem{lemma}      [theorem]{Lemma}
\newtheorem{definition} [theorem]{Definition}
\newtheorem{example}[theorem]{Example}
\g@addto@macro\bfseries{\boldmath}
\newcommand{\ie}{i.\,e.\xspace}
\newcommand{\cf}{cf.\xspace}
\newcommand{\R}{\mathbb{R}}
\DeclareMathOperator{\Unif}{Unif}                         % uniform distribution
\newcommand{\vecone}{\vec{1}}
\newcommand{\TZ}{\textsc{TZ}\xspace}                      % # of trailing zeroes
\newcommand{\LO}{\textsc{LO}\xspace}                      % # of leading ones
\newcommand{\OMM}{\textsc{OMM}\xspace}                             % OMM function
\newcommand{\LOTZ}{\textsc{LOTZ}\xspace}
\newcommand{\mLOTZ}{\ensuremath{m}\text{-}\textsc{LOTZ}\xspace}
\newcommand{\mOMM}{\ensuremath{m}\text{-}\textsc{OMM}\xspace}
\newcommand{\mCOCZ}{\ensuremath{m}\text{-}\textsc{COCZ}\xspace}
\newcommand{\mLOTZFULL}{\ensuremath{m}\text{-}\textsc{LeadingOnesTrailingZeroes}\xspace}
\newcommand{\mOMMFULL}{\ensuremath{m}\text{-}\textsc{OneMinMax}\xspace}
\newcommand{\mCOCZFULL}{\ensuremath{m}\text{-}\textsc{CountingOnesCountingZeros}\xspace}
\newcommand{\rp}{\mathrm{rp}}
\newcommand{\refer}{\mathcal{R}_p}
\newcommand{\nsga}{NSGA\nobreakdash-II\xspace}
\newcommand{\nsgaIII}{NSGA\nobreakdash-III\xspace}
\newcommand{\toggleplot}[1]{{\textcolor{red}{Plots removed to increase compilation speed. Use command $\backslash$toggleplot in preamble to reinsert them.}}}
\newcommand{\andre}[1]{\textcolor{orange}{}}
\newcommand{\dirk}[1]{\textcolor{brown}{}}
\newcommand{\cuong}[1]{\textcolor{cyan}{}}
\newcommand{\frank}[1]{\textcolor{magenta}{}}
\newcommand{\newedit}[1]{\textcolor{black}{#1}}
\newcommand*{\todo}[1]{\textcolor{red}{}}
\newcommand{\neweditx}[1]{\textcolor{black}{#1}} 
\newcommand{\andre}[1]{\textcolor{orange}{[(Andre) #1]}}
\newcommand{\dirk}[1]{\textcolor{brown}{[(Dirk) #1]}}
\newcommand{\cuong}[1]{\textcolor{cyan}{[(Cuong) #1]}}
\newcommand{\frank}[1]{\textcolor{magenta}{[(Frank) #1]}}
\newcommand{\newedit}[1]{\textcolor{black}{#1}}
\newcommand*{\todo}[1]{\textcolor{red}{\textrm{(TODO: #1)}}}
\newcommand{\neweditx}[1]{\textcolor{blue}{#1}}
\begin{document}

%% -- author information    
\ifarxiv
\author[1]{Andre Opris}
\author[1]{Duc-Cuong Dang}
\author[2]{Frank Neumann}
\author[1]{Dirk Sudholt}
\affil[1]{University of Passau, Passau, Germany}
\affil[2]{The University of Adelaide, Adelaide, Australia}
\date{}
\else

\author{Andre~Opris}
\affiliation{
    \institution{University of Passau\city{Passau}\country{Germany}}
}

\author{Duc-Cuong~Dang}
\affiliation{
    \institution{University of Passau\city{Passau}\country{Germany}}
}

\author{Frank~Neumann}
\affiliation{
    \institution{The University of Adelaide\city{Adelaide}\country{Australia}}
}

\author{Dirk~Sudholt}
\affiliation{
    \institution{University of Passau\city{Passau}\country{Germany}}
}

\fi

\title{Runtime Analyses of NSGA-III on Many-Objective Problems}
    
\ifarxiv
\maketitle
\fi
    
\begin{abstract}
%\andre{I wrote minimizes everywhere. Is this American spelling?}
%\andre{Add in a footnote what the contribution of the single authors is.}
NSGA-II and NSGA-III are two of the most popular evolutionary multi-objective algorithms used in practice. While NSGA-II is used for few  objectives such as 2 and~3, NSGA-III is designed to deal with a larger number of objectives. 
In a recent breakthrough, Wietheger and Doerr (IJCAI 2023) gave the first runtime analysis for NSGA-III on the 3-objective \textsc{OneMinMax} problem, showing that this state-of-the-art algorithm can be analyzed rigorously.
%by rigorous mathematical means.

%We extend the initial study of Wietheger and Doerr~\cite{WiethegerD23} for the 3-objective \textsc{OneMinMax} function 
We advance this new line of research by presenting the first runtime analyses of NSGA-III on the popular many-objective benchmark problems 
%having $m$ objectives, namely 
\mLOTZ, \mOMM, and \mCOCZ, for an arbitrary \neweditx{constant} number~$m$ of objectives. 
%We provide parameter settings leading to a good performance guarantee, including the number of reference points and the population size. 
Our analysis provides ways to set the important parameters of the algorithm: the number of reference points and the population size, so that a good performance can be guaranteed. We show how these parameters should be scaled with the problem dimension, the number of objectives and the fitness range.
%
%We show the following expected time bounds: $O(n^{m+1})$ for \mLOTZ, $O(n^{m/2+1}\log n)$ for \mOMM and $O($ is optimised in expected time $O(n^{m+1})$, \mOMM 
%if the number of reference points is chosen large enough, based on the number of objectives and the size of the fitness range. 
To our knowledge, these are the first runtime analyses for NSGA-III for more than 3 objectives. 
%Our theoretical results provide additional theoretical foundations on NSGA-III and its working behaviour. 
%We complement our theoretical findings by an experimental study which shows the scaling behaviour on the considered benchmarks in terms of the number of objectives and the problem size.

%\dirk{Say that our approach is different from previous work and generalizes nicely to arbitrary numbers of objectives. It allows us to tackle several many-objective problems. We use different calculations that allow us to derive more precise results. Our approach also works for functions with dominated points.}\todo{Dirk}
\end{abstract}
    
\ifarxiv\else %% ACM format required to call this after the abstract
\keywords{Runtime analysis, evolutionary multiobjective optimization}
\maketitle
\fi

% \frank{How do the results compare to the results in Laumanns for SEMO? I assume that we don't get better runtime bounds, but do we get something better in the experiments.}
% \cuong{Laumanns is $O(n^{m+1})$ while Andre's result is $6\left(\frac{2n}{m}+1\right)^{m-1}n^2$ in terms of fitness evaluations for m-LOTZ. Thus the speed up is like $2^{m-1}$ for $m\geq 4$ but we have to be careful as we only compare upper bounds here and currently Andre's result requires $m=O(1)$.}
% \cuong{Compared to Benjamin's result, we have a better bounds and the leading constants is mentioned in our results. For example with 3-OMM, Benjamin requires $p=21n$ to define the reference points, but they only did $p=4.65n$ in experiments, otherwise it would blow up. For $m$ objectives, Andre only requires $p=2 m \sqrt{m} f_{\max}$, and $f_{\max}=n$ in 3-OMM thus we only need $p=6\sqrt{3}n$. I think we should also add a result for 3-OMM as well, so that we can compare with Benjamin's result.}\dirk{Since $6\sqrt{3} \approx 10.3923 < 21/2$ we beat Benjamin by a factor of 2 ;-). Nice!}
% %\andre{But I don't have. What a pity :(.}
% %\cuong{I guess that is because you are in 4 objectives.}

% \dirk{What if we have slightly fewer reference points than required? Can we argue that then \nsgaIII can still succeed with 1- and 2-bit flips?}

\section{Introduction}\label{sec:intro}

Evolutionary multi-objective (EMO) algorithms 
%have been frequently %% -- aye overfull box
\neweditx{frequently have been 
applied for 
%a wide range of %% -- saying too many times wide range
a variety of} multi-objective optimization problems~\cite{kdeb01,coello2013evolutionary}. %Those problems are often harder to solve than their single-objective counterparts, but 
Those problems have a wide range of applications~\cite{coello2013evolutionary}. 
EMO algorithms, including the most popular ones like NSGA-II~\cite{Deb2002} and NSGA-III~\cite{DebJain2014}, are particularly well suited for dealing with \neweditx{multiple objectives} because they evolve populations of search points. These populations are 
%seems to be 
an ideal vehicle to not only compute one single optimal solution but a set of solutions representing the trade-offs of the underlying objective functions.

Runtime analysis~\cite{Jansen2013,DoerrN20} has contributed to the theoretical understanding of evolutionary multi-objective algorithms over the last 20 years. Different directions have been followed during this time. One major direction involved solving single-objective optimization problems through adding additional objectives. This has become known as \emph{multi-objectivization}~\cite{DBLP:conf/emo/KnowlesWC01} or, when solving constrained single-objective problems through relaxing the constraints into additional objectives, under the term \emph{Pareto optimization}.
Pareto optimization for solving constrained single-objective problems \neweditx{establishes} the constraints as an additional objective which allows evolutionary multi-objective algorithms to mimic a classical greedy behavior~\cite{DBLP:journals/nc/NeumannW06,DBLP:journals/ec/FriedrichHHNW10,NeumannWittBook}. This allows to obtain results \neweditx{for a wide range} of settings in the context of submodular optimization~\cite{DBLP:books/sp/ZhouYQ19}.
\neweditx{On the other hand}, the behavior of evolutionary algorithms has been studied for classical multi-objective optimization problems such as the multi-objective variants of minimum spanning trees~\cite{DBLP:journals/eor/Neumann07} and shortest paths~\cite{DBLP:conf/foga/Horoba09}. Furthermore, the impact of different diversity mechanisms such as the density estimate of SPEA2 and the epsilon dominance approach have been analyzed~\cite{DBLP:series/sci/HorobaN10}. Here, the focus has been mainly on their ability to obtain  good approximations for artificial bi-objective benchmark functions with exponentially large Pareto fronts~\cite{DBLP:conf/gecco/HorobaN08,DBLP:conf/foga/HorobaN09}.

Most of the previously mentioned studies consider variants of the algorithm GSEMO
%or variants 
that use built-in mechanisms such as the density estimator in order to point out the benefits or drawbacks of the different components of the algorithm.
%\dirk{@Frank, please check the previous sentence.}
Recently, there has been a growing interest in studying the behavior of algorithms such as NSGA-II~\cite{Doerr2023,Zheng2022,Bian2022PPSN}, \nsgaIII~\cite{WiethegerD23}, and MOEA/D~\cite{Huang2021,Huang2019,Huang20211} with respect to their runtime behavior. Results have mainly been obtained for artificial benchmark functions, although there are results for NSGA-II \newedit{on} the multi-objective minimum spanning tree problem~\cite{DBLP:conf/ijcai/CerfDHKW23}, and \neweditx{for} MOEA/D for the more general setting of computing multi-objective minimum weight bases~\cite{do2023rigorous} \neweditx{that hold} for any number $m$ of objectives.

%\frank{Need to say something on MOEA/D results for many objectives.}
Algorithms such as MOEA/D and \nsgaIII have been developed to deal with problems having 3 or more objectives as NSGA-II, SPEA2 and similar algorithms based mainly on Pareto dominance do not perform well for many objectives due to the increasing number of objective vectors that become incomparable. This has been frequently pointed out in practice and a recent study for NSGA-II gives rigorous runtime results for this behavior~\cite{Zheng2023Inefficiency}.
Consequently, it is essential to provide a theoretical foundation of NSGA-III and MOEA/D which explains their success for many objectives in practice through rigorous analyses. For MOEA/D variants such type of results have been provided in \cite{Huang2021,do2023rigorous} and \neweditx{recently some results have also been provided for the hypervolume-based  SMS-EMOA~\cite{DBLP:conf/aaai/0001D24,WiethegerD24}}. However, to our knowledge up to now no such results are available for \nsgaIII.
We contribute to this line of research by proving analyses of \nsgaIII on classical benchmark problem with $m$ objectives, namely \mLOTZFULL, \mOMMFULL and \mCOCZFULL, which can be shortly written as \mLOTZ, \mOMM and \mCOCZ respectively. 
In $\mLOTZ(x)$ and $\mOMM(x)$ we divide $x$ into \neweditx{$m/2$} many blocks of equal size. In case of $\mLOTZ$ we count the number $\LO$ of leading ones (the length of the longest prefix containing only ones) and the number $\TZ$ of trailing zeros in each block (the length of the longest suffix containing only zeros) while in case of $\mOMM$ we count the number of ones/zeros in each block. In $\mCOCZ$ we divide the second half in $m/2$ blocks of equal size and optimize the number of ones and zeros.

\paragraph{\textbf{Previous related work:}}

The only previous runtime analysis of \nsgaIII was recently published by~\citet{WiethegerD23}. They consider a 3-dimensional variant of \OMM, in which the bit string is cut in two halves and the objectives are to maximize the number of zeros in the whole bit string, to maximize the number of ones in the first half and to maximize the number of ones in the second half:
\[
    \textrm{3-OMM}(x) = \left(n - \sum_{i=1}^n x_i, \sum_{i=1}^{n/2} x_i, \sum_{i=n/2+1}^n x_i\right)
\]
They showed that, when employing $p \ge 21n$ divisions along each objective to define reference points, all individuals associated with the same reference point have the same objective value. This implies that non-dominated solutions can never disappear from the population \neweditx{for a sufficiently large population size}. \Citet{WiethegerD23} proved that \nsgaIII with a population size of $\mu \ge (n/2+1)^2$ with high probability covers the whole Pareto front in $O(cn^3 \log n)$ evaluations. Here $1/c$ is a lower bound on the probability for any individual in the population to produce an offspring in one generation. The smallest value of~$c$ is obtained when using uniform parent selection as then $c = 1-(1-1/\mu)^\mu \ge 1-1/e$ can be chosen and the upper bound becomes $O(\mu n \log n)$ or $O(n^3 \log n)$ for the best choice of the population size~$\mu$.
Our results are not directly comparable as we consider a different definition for \mOMM and we only study even numbers of objectives, $m$. However, we show that in the same time of $O(n^3 \log n)$ \nsgaIII can also solve \mOMM with $m=4$ objectives. 
%our upper bound is only $O(\mu n \log n) = O(n^3 \log n)$ for the best choice of~$\mu$.

\paragraph{\textbf{Our contribution:}}
In this paper, we conduct a runtime analysis of NSGA-III, one of the most popular multi-objective optimization algorithms, on well-known benchmark functions \mLOTZ, \mOMM, and \mCOCZ of many-objective optimization. We develop a set of mathematical tools for our analysis which allow general statements about the good behaviors of the algorithm on any many-objective problem, such as being able to maintain its structured population with respect to the reference points. These tools provide parameter settings on how to achieve such behaviors. They are built upon different geometric arguments and calculations compared to the previous work \cite{WiethegerD23}, \neweditx{and are stated more generally.}

\newedit{By applying these tools on the benchmark functions for any constant $m$ number of objectives, we prove the following results: 
NSGA-III with uniform parent selection and standard bit mutation optimizes \mLOTZ in expected $O(n^2)$ generations, or $O(n^{m+1})$ fitness evaluations, using a population size of $\mu=O(n^{m-1})$.
It optimizes \mOMM and \mCOCZ in expected $O(n\log{n})$ generations, or $O(n^{m/2+1}\log{n})$ fitness evaluations, using  population size $\mu=O(n^{m/2})$. The required numbers of reference points are ${4n\sqrt{m}\choose m-1}$
%\cuong{Is this correct? Where does the $2$ in the square root come from?} %% OK that was a typo
for \mLOTZ and \mOMM, and ${n(m+2)\sqrt{m} \choose m-1}$ for \mCOCZ.}

\newedit{As far as we know this is the first runtime analysis of NSGA\nobreakdash-III on more than three objectives. Our tools can be also applied to the three-objective case
%, and our estimates 
and can greatly reduce the number of required reference points as compared to \cite{WiethegerD23}. However, this is of independent interest and not the main focus of our paper, which is to understand how the running time of NSGA-III scales with \neweditx{the problem dimension for a constant number of objectives}. We believe our work will serve as \neweditx{a stepping stone} towards \neweditx{further analyses} of NSGA-III on more complex problems and towards a better understanding of the capabilities of this advanced EMO algorithm.}

% \cuong{I added the contribution, please check.}

% Selling points of this work:\todo{Dirk: explain our contribution using these notes}
% \begin{itemize}
%     \item analysing a well known many-objective benchmark function \mLOTZ
%     \item first analysis of \nsgaIII on more than 3 objectives
%     \item main result for constant~$m$ and smallest possible population size: $O(n^{2m+1})$ evaluations with population size $\mu = O(n^{2m-1})$ on \mLOTZ (on which the size of the Pareto front is $\Theta(n^m)$)n n 
%     \item we use different arguments than~\cite{WiethegerD23}\todo{make sure we briefly explain them}
%     \item we improve the lower bound on the number of reference points, in the case of 3-OMM by a factor greater than~2.
%     \item more general proof arguments?
% \end{itemize}

\section{Preliminaries}\label{sec:prelim}

Denote by $\ln$ the logarithm to base $e$ and for $m \in \mathbb{N}$ let $[m]:=\{1, \ldots , m\}$. 
For \neweditx{vectors} $v,\neweditx{w} \in \mathbb{R}^m$ denote by $\lvert{v}\rvert$\neweditx{, $\lvert{w}\rvert$ their} length\neweditx{s (Euclidean norms)},
and 
%for another vector $w \in \mathbb{R}^m$ denote 
by $v \circ w$ the\neweditx{ir} standard scalar product, 
%of $v$ and $w$, 
i.e. $v \circ w = \lvert v \rvert \lvert w\rvert \cos(\phi)$ where $\phi$ is the angle between $v$ and $w$. 
\neweditx{A \emph{hyperplane} $H$ in $\mathbb{R}^m$ is 
defined by a non-null vector $a\in\mathbb{R}^m$ and a $c\in\mathbb{R}$ as $H:=\{x\in\mathbb{R}^m \mid a\circ x = c\}$.
The \emph{intercept} $I_j$ of $H$ with 
%given by all points $x \in \mathbb{R}^m$ with $a_1x_1+ \ldots + a_mx_m=c$ for fixed $(a_1, \ldots , a_m) \neq 0$ and $c \in \mathbb{R}$ 
%and $s:=I_j$ is called \emph{intercept} of $H$ and 
the $j$-th coordinate axis is the set of points $s$ where $s_i=0$ for every $i \neq j$ and $s_j$ satisfying $s_j a_j = c$. Particularly $I_j$ is a singleton when $a_j\neq 0$.} 
For two random variables $X$ and $Y$ on $\mathbb{N}_0$ we say that \neweditx{$Y$ stochastically dominates $X$ if $P(Y \leq c) \leq P(X \leq c)$ for every $c \geq 0$}. 

This paper focuses on multi-objective optimization in
a discrete setting, specifically the maximization of an $m$-objective function $f(x):=(f_1(x), \ldots , f_m(x))$ where $f_i:\{0,1\}^n \to \mathbb{N}_0$ for each $i \in [m]$.

\begin{definition}
Consider an $m$-objective function $f$.
\begin{itemize}
\item Given two search points $x, y \in \{0, 1\}^n$,
$x$ \emph{weakly dominates} $y$, denoted by $x \succeq y$
if $f_{i}(x) \geq f_{i}(y)$ for all $1\leq i \leq m$;
and $x$ \emph{dominates} $y$, denoted $x \succ y$,
if one inequality is strict; if neither $x \succeq y$ nor $y \succeq x$ then $x$ and $y$ are \emph{incomparable}.
\item A set $S \subseteq \{0,1\}^n$ is a \emph{set of mutually incomparable solutions} with respect to $f$ if all search points in $S$ are incomparable (\neweditx{thus any} two search points in $S$ have distinct fitness vectors).
\item Each solution that is not dominated by any other in $\{0, 1\}^n$ is called \emph{Pareto-optimal}. A set of these solutions
that \neweditx{covers} all possible non-dominated fitness values \neweditx{and are mutually incomparable}
is called a \emph{Pareto\neweditx{(}-optimal\neweditx{)} set} of $f$.
\end{itemize}
%two distinct search points in $S$ are incomparable.
\end{definition}

The \neweditx{weak} dominance relation and the dominance relation are both transitive and when $m=2$, the function is called \emph{bi-objective}.

Let $f_{\max}$ be the maximum possible value of $f$ in one objective, i.e. $f_{\max}:=\max\{f_j(x) \mid x \in \{0,1\}^n, j \in [m]\}$. 
\newedit{We tacitly assume $f_{\max} \ge 1$ as otherwise $f_j(x) = 0$ for all~$j$ and all~$x$ and optimization is trivial.} 
For $N \subseteq \{0,1\}^n$ let $f(N):=\{f(x) \mid x \in N\}$. 

\nsgaIII~\cite{DebJain2014} is summarized in Algorithm~\ref{alg:nsga-iii}. \neweditx{As common in runtime analysis, we do not specify a stopping criterion as we are only interested in the time until a Pareto-optimal set is found.}
%
% To understand this algorithm the notion of a \emph{hyperplane} is needed.
%
% \begin{definition}
%     A \emph{hyperplane} $H \subset \mathbb{R}^m$ is a set of the form 
%     \[
%     H:=\{(x_1, \ldots , x_m) \in \mathbb{R}^m \mid a_1x_1+ \ldots + a_mx_m = c\}
%     \]
%     where  $a_1, \ldots ,a_m \in \mathbb{R}^m$ with $(a_1, \ldots a_m) \neq 0$ and $c \geq 0$.
% \end{definition}
%
\begin{algorithm2e}[h]
	Initialize $P_0 \sim \Unif( (\{0,1\}^n)^{\mu})$\\
        Initialize \neweditx{$E_0 := \{\{-\infty\}^m\}$, $y^{\max}:= \{-\infty\}^m$, $y^{\min}:=\{+\infty\}^m$ (these are subsequently used in Algorithm~\ref{alg:normalization})}\\
 	%Partition $P_0$ into layers $F^1_0,F^2_0,\dots$ of non-dominated fitnesses, then for each layer $F^i_0$ compute the crowding distance $\cdist(x,F^i_0)$ for each $x \in F^i_0$\\
	\For{$t:= 0$ to $\infty$}{
		Initialize $Q_t:=\emptyset$\\
		\For{$i=1$ to $\mu$}{
			Sample $p$ from $P_t$ uniformly at random \label{line:nsga-iii:selection}\\
			%Sample $u \sim \Unif([0,1])$\\
			%\If{$u<p_c$}
			%{Create $s_1, s_2$ by
                %1-point %% remove because our results are currently independent to this detail
                %crossover on $p_1, p_2$}
			%\Else{Create $s_1, s_2$ as exact copies of $p_1, p_2$}
			Create $x$ by standard bit mutation on $p$ with mutation probability $1/n$\\
			%Create $s'_2$ by bitwise mutation on $s_2$ with rate $1/n$\\
			Update $Q_t:=Q_t \cup \{x\}$\\
		}
		Set $R_t := P_t \cup Q_t$\\
		Partition $R_t$ into layers $F^1_t,F^2_t,\ldots ,F^k_t$ of non-dominated fitness values\\
            Find $i^* \geq 1$ such that $\sum_{i=1}^{i^*-1} \lvert{F_t^i}\rvert < \mu$ and $\sum_{i=1}^{i^*} \lvert{F_t^i}\rvert \geq \mu$\\
            Compute $Y_t = \bigcup_{i=1}^{i^*-1} F_t^i$\\
            Select $\tilde{F}_t^{i^*} \subset F_t^{i^*}$ such that $\lvert{Y_t \cup \tilde{F}_t^{i^*}}\rvert = \mu$ via \textsc{Select} \neweditx{(Algorithm~\ref{alg:Survival-Selection}, which also calls Algorithm~\ref{alg:normalization})}\\  
		Create the next population $P_{t+1} := Y_t \cup \tilde{F}^t_{i^*}$\\
	}
	\caption{NSGA-III Algorithm on an $m$-objective function $f$ with population size $\mu$ \neweditx{~\cite{DebJain2014,WiethegerD23}}}
	\label{alg:nsga-iii}
\end{algorithm2e}
At first, a population of size $\mu$ is generated by initializing $\mu$ individuals uniformly at random. Then in each generation, a population $Q_t$ on $\mu$ new offspring is created \neweditx{by $\mu$ times choosing a parent $p$ uniformly at random} in Line~\ref{line:nsga-iii:selection}, and then applying standard bit mutation on $p$ where each bit is flipped independently with probability $1/n$. During the survival selection,
%the population $R_t$ of both parents and offspring is
the parent and offspring populations $P_t$ and $Q_t$ are joined into $R_t$,
and then partitioned into
layers $F^1_{t+1},F^2_{t+1},\dots$ by the \emph{non-dominated sorting algorithm}~\cite{Deb2002}.
The layer $F^1_{t+1}$ consists of all non-dominated points,
    %and $F^i_{t+1}$ consists of all points which are dominated by points from $F^1_{t+1},...,F^{i-1}_{t+1}$ for $i>1$.
and $F^i_{t+1}$ for $i>1$ contains points that are only dominated by
those from $F^1_{t+1},\dots,F^{i-1}_{t+1}$. Then the critical and unique index $i^*$ with $\sum_{i=1}^{i^*-1} \lvert{F_i^t}\rvert < \mu$ and $\sum_{i=1}^{i^*} \lvert{F_i^t}\rvert \geq \mu$ is determined (i.e. there are fewer than $\mu$ search points in $R_t$ with a lower rank than \neweditx{$i^*$}, but at least $\mu$ search points with rank at most $i^*$). 
Then all individuals with a smaller rank than $i^*$ are taken into $P_{t+1}$ and the remaining points are chosen from $F_{i^*}^t$ with the procedure \textsc{Select} (see Algorithm~\ref{alg:Survival-Selection}): Firstly, a normalized objective function $f^n$ is computed (see Algorithm~\ref{alg:normalization}) and secondly the search points from $F_t^{i^*}$ are associated \neweditx{with} reference points, which are distributed in a uniform way on the normalized hyperplane, i.e. the plane given by the equation $x_1 + \ldots + x_m = 1$. 

We use the detailed normalization from~\cite{WiethegerD23} with minor differences to ~\cite{Blank2019}. \neweditx{Although this procedure is formulated for minimization problems, we show that it is also suitable for maximization problems if the given positive threshold $\varepsilon_{\text{nad}}$ is sufficiently large.}
% for our maximization problems. \andre{Should we expand more about that? Open Todo for discussion!} \todo{HERE!}}
Let $t$ be any generation. Let $y_j^{\min}$ and $y_j^{\max}$ be the minimum and maximum value in the $j$-th objective over all joined populations from generation $t$ and previous ones (i.e. from $R_0, \ldots R_t$),   respectively. 
%\neweditx{For ease of notation, $E_0$ and $y^{\max}$ are initialized with $\{-\infty\}^m$ and $y^{\min}$ with $\{\infty\}^m$ (as Doerr and Wietheger did for $E$ in~\cite{WiethegerD23}). Here "$\infty$" denotes a placeholder for a number which is large enough such that Algorithm~\ref{alg:normalization}, which is a subroutine of Algorithm ~\ref{alg:nsga-iii}, puts the values for $E_t$, $y^{\max}$ and $y^{\min}$ correctly in all iterations $t>0$. This placeholder occurs as values for $E_t$, $y^{\max}$ and $y^{\min}$ in the $0$-th iteration.} %% cuong: I don't think we need this, the reviewer got the idea but only wants a certain alignment with Deb and Jain.
%
The point $y^{\min}:=(y^{\min}_1, \dots, y^{\min}_m)$ is called \neweditx{\emph{the ideal point} in \cite{DebJain2014}}. 
Then for each objective $j$ we compute an extreme point \neweditx{$e^{(j)}$} in that objective by using an achievement scalarization function (see \neweditx{Section 4.2}  of~\cite{Blank2019} for the exact definition), \neweditx{and the hyperplane spanned by these $e^{(j)}$ is denoted $H$}.
The intercept $I_j$ of $H$ 
with the $j$-th objective axis gives the \neweditx{so-called} \emph{Nadir point estimate }$y_j^{\text{nad}}$. 
%In case that 
\neweditx{If} $H$ has smaller dimension than $m$ or if an intercept is either larger than a given positive threshold $\varepsilon_{\text{nad}}$ or smaller than $y_j^{\max}$, $y_j^{\text{nad}}$ is set to $\max_{y\in F_t^1} f_j(y)$, the maximum of the $j$-th objective values which occur in the current first layer $F_t^1$ (which may differ from $y_j^{\max}$). We also replace $y_j^{\text{nad}}$ by $\max_{y\in F_t^1} f_j(y)$ if the Nadir point estimate is smaller than $y_j^{\min} + \varepsilon_{\text{nad}}$. Then the normalization $f_j(x)^n$ is computed as in Line~\ref{line:normalization-value} in Algorithm~\ref{alg:normalization} and the normalized objective function $f^n$ is defined as $(f_1^n , \ldots , f_m^n)$. 
\newedit{The goal of normalization is to have the normalized fitness vectors in \neweditx{$(\R_0^+)^m$} or in $[0,1]^{m}$ while preserving or properly emphasizing the dominance relations \cite{Blank2019}.}
Note that an individual $x$ (weakly) dominates another individual $y$ with respect to $f$ if and only if $x$ (weakly) dominates $y$ with respect to $f^n$.
\begin{figure}
    \centering
    \includegraphics[scale=0.8]{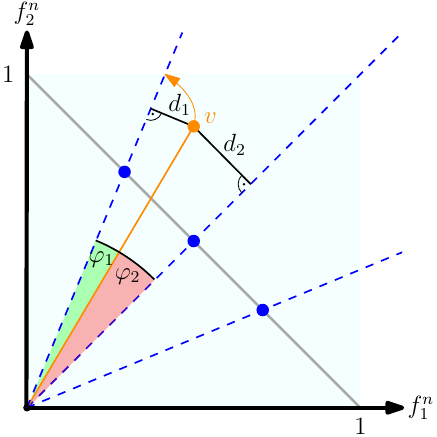}
    \caption{Sketch of the way search points (orange) are associated with reference points (blue dots connected by dashed blue lines from the origin) for $m=2$ dimensions. The axes show the normalized fitness and all points lie in the unit cube $[0, 1]^m$ (blue shading). In the example, $\varphi_1 < \varphi_2$ and via $\sin(\varphi_1) = d_1/\vert{v}\vert$ and $\sin(\varphi_2) = d_2/\vert{v}\vert$ we conclude $d_1 < d_2$; the search point~$v$ is associated with the leftmost reference point.}
    \label{fig:reference-points} 
\end{figure}
\neweditx{
\begin{example}
    Suppose that $m=3$ and $e^{(1)}=(2,1,2)$, $e^{(2)} = (1,0,3)$ and $e^{(3)} = (0,2,0)$. 
    %\cuong{These vectors with symbol $e$ have not been explained at this point.}  %%-- this is fixed 
    Then $H$ is given by the equation $-x_1+4x_2+3x_3=8$ and the intercept of $H$ with the first coordinate axis is $-8$ which is negative. Hence valid is set to \textit{false} and we have $y_j^{\text{nad}} = \max_{y \in F_1}f_j(y)$. On the other hand if $y^{\max}=(2,4,2)$, $e^{(1)}=(2,0,0)$, $e^{(2)} = (0,4,0)$ and $e^{(3)} = (0,0,2)$ then $H$ is given by $2x_1+x_2+2x_3=4$ and then the intercepts are $I_1=2$, $I_2=4$ and $I_3=2$. Then valid is set to \textit{true} and we have then $y_j^{\text{nad}} = I_j$ which coincides also with $y_j^{\max}$. Note that $y_j^{\max} \neq \max_{y \in F_1}f_j(y)$ in general.
\end{example}}
%\neweditx{We show below that indeed $y^{\text{nad}}$ coincides always with $y^{\max}$ in our setting (since we are looking at maximization problems).}
\newedit{In Algorithm~\ref{alg:Survival-Selection} we use the same set of structured reference points as
proposed in the original paper~\cite{DebJain2014}, which stems from the method of
Das~and~Denis~\cite{Das1998}. The points are defined on the simplex of the unit
vectors 
$(1,0,\dots,0)^{\intercal},(0,1,\dots,0)^{\intercal},\dots,(0,0,\dots,1)^{\intercal}$ as:}
%In this paper, we do not use the set of structured reference points as proposed in~\cite{DebJain2014}. \todo{What could be a nice motivation here? Dirk/Cuong: Could you write something here?} \cuong{Your way of defining the reference points is standard to \nsgaIII which called Das and Dennis's approach. Thus the number of reference points is the same as what Deb and Jain mentioned in their paper.} Instead we consider the set of structured reference points
\[
\refer:=\left\{\left(\frac{a_1}{p}, \ldots ,\frac{a_m}{p} \right) 
    \mid  
    \newedit{(a_1,\dots,a_m) \in \mathbb{N}_0^m,} % cuong: shorter and better this way
    \sum_{i=1}^m a_i = p
    %a_i \in \mathbb{N}_0 \text{ for every } i \in \{1, \ldots , m\}
\right\}
\]
where $p \in \mathbb{N}$ is a parameter one can choose according 
%which can be tailored 
to the fitness function $f$. The cardinality of $\refer$,
\ie the number of ways to choose non-negative integers $(a_1,\dots,a_m)$ summing up to $p$, is ${p+m-1 \choose m-1} = {p+m-1 \choose p}$ which is 
the \emph{weak} $m$-\emph{composition number} of $p$\neweditx{. This can be shown using the stars and bars argument, \cf Chapter II.5 of \cite{Feller1968}.}
%\andre{Remove this part? Have no strong opinion: This formula can be explained as follows. 
%Recall the stronger definition: an $m$-composition number of $p$ is the number of ways to write 
%$p$ as a sum of $m$ positive integers. 
%%When zeros are allowed in the sum, this is called the weak 
%%$m$-composition number of $p$. 
%This 
%%m$-composition number $p$ 
%number is exactly the number of ways to put 
%$m-1$ commas at the locations of the squares in the sequence $1\square1\square\dots\square1$ of 
%$p$ $1$s with $p-1$ squares in between. This is because if we put the plus symbol in 
%the remaining squares then the resulted sequence after applying these addition operations 
%gives one way to compose $p$ from $m$ positive integers. The $m$-composition number of $p$ 
%is then ${p-1\choose m-1}$. Furthermore, in each way to compose $p+m$ from $m$ positive 
%integers, subtracting one from each of the $m$ terms gives exactly one way to weakly compose $p$ 
%from $m$ non-negative integers. So, the weak $m$-composition number of $p$ is equal to the 
%$m$-composition number of $p+m$, which is ${p+m-1 \choose m-1}$.} 
%%\cuong{I updated this part, please check.}\andre{nice!}\dirk{Nice! Hope we have space for this.}

After \neweditx{the} normalization procedure, each individual of rank at most $i^*$ is associated \neweditx{with} the reference point $\text{rp}(x)$ such that the angle between $f^n(x)$ and $\text{rp}(x)$ is minimized (see Algorithm~\ref{alg:Survival-Selection} and Figure~\ref{fig:reference-points}). Equivalently, $x$ is associated \neweditx{with} that reference point such that the distance between $f^n(x)$ and the line through the origin and $\text{rp}(x)$ is minimal. \neweditx{Here ties are broken deterministically, i.e. two individuals which have the same smallest distance to two reference points are assigned to the same reference point.} Then, one iterates through all the reference points where \newedit{the} reference point with the fewest associated individuals that are already selected for the next generation $P_{t+1}$ is chosen. Ties are broken uniformly at random. A reference point is omitted if it only has associated individuals that are already selected for $P_{t+1}$. Then, among the not yet selected individuals \neweditx{of that reference point} the one nearest to the chosen reference point is taken for the next generation where ties are again broken \neweditx{uniformly} at random. If the required number of individuals is reached (i.e. if $\lvert{Y_t}\rvert+\lvert{\tilde{F}_t^i}\rvert = \mu$) the selection ends. 
Two minor differences to the treatment of Blank et al. in~\cite{Blank2019} are that our $H$ is spanned by 
%the extreme points 
$e^{(j)}$ and not by $e^{(j)}-y^{\min}$ 
%(where $y^{\min}=(y_1^{\min},\ldots , y_n^{\min})$), 
and that \neweditx{we set} $y_j^{\text{nad}} := I_j$ 
%in our case 
in Line~\ref{line:intercept} 
%and not 
\neweditx{instead of} $y_j^{\text{nad}} := I_j+y_j^{\min}$ (see also~\cite{WiethegerD23}).

\begin{algorithm2e}[h]
        %\textbf{Input:} %The multiset $Y_t$ of already selected individuals and the multiset $F_{i^*}$ to choose from.\\
        Compute the normalization $f^n$ of $f$ using the procedure \textsc{Normalize} (see Algorithm~\ref{alg:normalization})\\
        Associate each $x\in Y_t \cup F_t^{i^*}$ to its reference point $\rp(x)$ based on the smallest distance to the reference rays\\ %(where ties are broken randomly)\\
        For each $r \in \refer$, initialize $\rho_r:=|\{x\in Y_t \mid \mathrm{rp}(x)=r\}|$\\
        %For each reference point $r \in R$ denote by $\rho_r$ the number of (already selected) individuals from $Y_t \cup F_t^{i^*}$ associated with $r$. 
        Initialize $\tilde{F}_t^{i^*}=\emptyset$ and $R':=\refer$\\
        \While{true}{
        Determine $r_{\min} \in R'$ such that $\rho_{r_{\min}}$ is minimal (where ties are broken randomly)\\
        Determine $x_{r_{\min}} \in F_t^{i^*} \setminus \tilde{F}_t^{i^*}$ which is associated with $r_{\min}$ and minimizes \newedit{the distance} between the vectors $f^n(x_{r_{\min}})$ and $r_{\min}$ (where ties are broken randomly)\\
        \If{$x_{r_{\min}}$ exists}{
        $\tilde{F}_t^{i^*} = \tilde{F}_t^{i^*} \cup \{x_{r_{\min}}\}$\\
        $\rho_{r_{\min}} = \rho_{r_{\min}} + 1$\\
            \lIf{$\lvert{Y_t}\rvert + \lvert{\tilde{F}_t^{i^*}}\rvert = \mu$}
                {\Return{$\tilde{F}_t^{i^*}$}
                }
            }
            \lElse{$R'=R' \setminus \{r_{\min}\}$}
        }
	\caption{Selection procedure \textsc{Select} based on a set $\refer$ of reference points for maximizing a function}
	\label{alg:Survival-Selection}
\end{algorithm2e}

\begin{algorithm2e}[h]
        %\textbf{Input:} the objective function $f=(f_1, \ldots , f_d)$, the multiset $Y:=Y_t \cup F_{i^*}$, the observed minimum in each objective \todo{explain} $y^{\min} \in \mathbb{R}^d$ (in this and in previous generations) and the observed maximum in each objective $y^{\max} \in \mathbb{R}^d$ \todo{explain} and $E_{t-1} \subset \mathbb{R}^d$ the set of extreme points (of the previous iteration), initialized by $\{-\infty\}^d$ (i.e. $E_{-1} = \{-\infty\}^d$) \todo{I think there is also a typo in Benjamin's paper}.\\
        $E_{t+1}=\emptyset$\\
	%Partition $P_0$ into layers $F^1_0,F^2_0,\dots$ of non-dominated fitnesses, then for each layer $F^i_0$ compute the crowding distance $\cdist(x,F^i_0)$ for each $x \in F^i_0$\\
	\For{$j = 1$ to $m$}{
		$y^{\min}_j = \min\{y^{\min}_j, \min_{y \in R_t} (f_j(y))\}$\\ %\andre{as in Blank!}\label{line:min}\\
            $y^{\max}_j = \max\{y^{\max}_j, \max_{y \in F_t^1} (f_j(y))$\}\label{line:max}\\ %\andre{Adapted from Min!}\\
            Determine an extreme point $e^{(j)}$ in the $j$-th objective from $\neweditx{f(}Y_t \cup F_t^{i^*}\neweditx{)} \cup E_t$ using an achievement scalarization function~\label{line:Extreme-points}\\
            $E_{t+1} = E_{t+1} \cup \{e^{(j)}\}$}
            valid = \textit{false}\\
            \If{$e^{(1)}, \ldots , e^{(m)}$ are linearly independent~\label{line:lin-independent}}{
             valid = \textit{true}\\
             Let $H$ be the hyperplane spanned by $e^{(1)}, \ldots , e^{(m)}$\\
            \For{$j=1$ to $m$}{
            Determine the intercept $I_j$ of $H$ with the $j$-th objective axis\\
            \lIf{$I_j \geq \varepsilon_{\text{nad}}$ and $I_j \leq y^{\max}_j$~\label{line:Second-If}}{
            $y_j^{\text{nad}} = I_j$\label{line:intercept}}
            \Else{
            valid = \textit{false}\label{line:False}\\
            \textbf{break}
                    }
                }
            }
		\If{valid = \textit{false}}{
            \For{$j=1$ to $m$}{
            $y_j^{\text{nad}} = \max_{y \in F_1}f_j(y)$~\label{line:valid-false}
                }
            }
            \For{$j=1$ to $m$}{
            \If{$y_j^{\text{nad}}<y^{\min}_j + \varepsilon_{\text{nad}}$}{
            $y_j^{\text{nad}} = \max_{y \in F_1 \cup \neweditx{\ldots} \cup F_k}f_j(y)$~\label{line:normalization-value}
                }
            }
         Let $f_j^n(x) = (f_j(x)-y^{\min}_j)/(y_j^{\text{nad}}-y^{\min}_j)$ for each $x \in \{0,1\}^n$ and $j \in \{1, \ldots , m\}$ \label{line:output}
	\caption{Normalization procedure \textsc{Normalize} \cite{DebJain2014,WiethegerD23}}
	\label{alg:normalization}
\end{algorithm2e}

%Note that all points from $\refer$ are on the hyperplane determined by the coordinate vectors $(1,0, \ldots, 0), \ldots ,(0, \ldots, 0,1) \in \refer$. We claim that $(1+p/(m-1))^{m-1} \leq \lvert{\refer}\rvert \leq (p+1)^{m-1})$ and hence  $ \lvert{\refer}\rvert = \Theta(p^{m-1})$ if $m \in \Theta(1)$\dirk{$\lvert \refer \rvert$ seems to equal the partition number $p_m(p)$, see \url{https://en.wikipedia.org/wiki/Triangle_of_partition_numbers} (Cuong pointed this out). Is this helpful?}: On the one hand there are at most $p$ choices for $a_i$ where $i \in \{1,\ldots ,m-1\}$ and $a_m$ is completely determined by $(a_1, \ldots ,a_{m-1})$, i.e. $\lvert{\refer}\rvert \leq (p+1)^{m-1}$ and on the other hand we have
%$(a_1/p, \ldots , a_{m-1}/p,b/p) \in R$ for $0 \leq a_i \leq p/(m-1)$ and $b=p-\sum_{i=1}^{m-1} a_i$, i.e. $\lvert{\refer}\rvert \geq (1+p/(m-1))^{m-1}$. 
%The procedure of Deb and Jain creates $\binom{p+m}{p}$ many reference points which is  $\binom{p+m}{m}=\Theta(p^{m-1})$ since $m$ is constant (see Equation 3 in~\cite{DebJain2014}).
%\todo{The set $\refer$ is also uniform, since ... .}
%\todo{
%We could also try to construct something similar as Deb and Jain, but the additional benefit is not large.}
%
%\dirk{According to~\cite{deAzevedoPribitkin2009}, the partition number $p_m(p)$ is bounded from above by
%\[
%    |\refer| = p_m(p) \le \frac{1}{m!}\binom{p + \binom{m}{2} - 1}{m-1} \le \frac{1}{m!(m-1)^{m-1}} \cdot \left(p + \binom{m}{2}-1\right)^{m-1}
%\]}

\section{Sufficiently Many Reference Points Protect Good Solutions}

%\dirk{So \citet{WiethegerD23} using $m=3$ and $p = 21n$ have $\binom{21n+2}{2} = \frac{(21n+2)(21n+1)}{2} \approx 220.5n^2$ reference points. For $m=3$ and $p=4\sqrt{6}n$ we get $\approx (4\sqrt{6})^2 n^2/2 = 48n^2$ reference points. We need fewer reference points by a factor of $4.59$.}

The major goal is to show that, by employing sufficiently many reference points, once the population covers a fitness vector with a solution $x \in F_t^1$, it is covered for all future generations as long as $x \in F_t^1$. Such a statement was already shown in the groundbreaking paper by \citet{WiethegerD23}, but this was limited to the 3-objective \neweditx{\mOMM} problem 
\neweditx{and some} precision \neweditx{was} 
los\neweditx{t}
when \neweditx{they applied 
the} 
%specific 
inequality 
$\arccos(1-1/24)>2\arccos(1-18/42^2)$. We give a general proof that \nsgaIII maintains good solutions using different analytical arguments and require only $p=2 \cdot 3^{3/2}n \thickapprox 10\neweditx{.}39n$ divisions in case of 3-\OMM, which is approximately by a factor of $2$ smaller. \newedit{In contrast to~\cite{WiethegerD23} we compute distances from search points to their nearest reference point via the sine function of the angle between the lines running through both point\neweditx{s and} use a general trigonometrical identity for $\arcsin(x)$ for any $0<x\leq 1$.}
%reason for this could be\dirk{are we not sure? ``could'' sounds strange.}\todo{Dirk} that we express the angles with the $\sin$ function via the distance to the reference points\dirk{This is not too clear yet. Let's discuss.} 

\newedit{We start with a lemma about the normalized objectives of a multiobjective function.}

\begin{lemma}
\label{lem:Normalization}
Consider \nsgaIII on a multiobjective function $f:\{0,1\}^n \to \mathbb{N}_0^m$ \neweditx{and suppose that $\varepsilon_{\mathrm{nad}} \geq f_{\max}$}. \neweditx{In every generation, after running Algorithm~\ref{alg:normalization},} for every $x \in R_t$ and every objective $j$ we have $0 \leq f_j^n(x) \leq 1$ and $y_j^{\mathrm{nad}}-y_j^{\min} \leq f_\max$. %\neweditx{In particular, in our setting (when dealing with maximization problems) we have $y_j^{\text{nad}} = \max_{y \in F_t} f_j(y)$.}
\end{lemma}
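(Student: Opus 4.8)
The plan is to reduce both assertions to a single statement about the Nadir estimate, namely that in every generation and for every objective $j$,
\[ M_j \;\le\; y_j^{\mathrm{nad}} \;\le\; f_{\max}, \qquad \text{where } M_j := \max_{x \in R_t} f_j(x). \]
Once this sandwich is established, both parts follow at once. For the first part, any $x\in R_t$ satisfies $y_j^{\min}\le f_j(x)\le M_j\le y_j^{\mathrm{nad}}$, so the numerator $f_j(x)-y_j^{\min}$ of $f_j^n(x)$ is nonnegative and does not exceed the denominator $y_j^{\mathrm{nad}}-y_j^{\min}$, giving $0\le f_j^n(x)\le 1$. For the second part, $y_j^{\mathrm{nad}}\le f_{\max}$ together with $y_j^{\min}\ge 0$ yields $y_j^{\mathrm{nad}}-y_j^{\min}\le f_{\max}$.

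Before the case analysis I would record three elementary facts. First, since $y_j^{\min}$ is a minimum of $f_j$ over a collection of populations that includes $R_t$, we have $y_j^{\min}\le f_j(x)$ for every $x\in R_t$, and $y_j^{\min}\ge 0$ because $f$ maps into $\mathbb{N}_0^m$. Second, $y_j^{\max}\le f_{\max}$, as it is a maximum of genuine objective values. Third, and most importantly, the objective-wise maximum over $R_t$ is already attained in the first non-dominated layer, i.e. $\max_{y\in F_t^1} f_j(y)=\max_{x\in R_t} f_j(x)=M_j$: if a point maximising $f_j$ over $R_t$ is dominated, then following a chain of dominating points, which never decreases any objective, reaches a non-dominated point in $F_t^1$ with the same $f_j$-value. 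In particular $M_j\le f_{\max}$.

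The core of the argument is a case distinction on how Algorithm~\ref{alg:normalization} fixes $y_j^{\mathrm{nad}}$, and this is exactly where the hypothesis $\varepsilon_{\mathrm{nad}}\ge f_{\max}$ is needed. If the run ends in the invalid branch, then $y_j^{\mathrm{nad}}=\max_{y\in F_t^1}f_j(y)=M_j$, which already lies in $[M_j,f_{\max}]$. If instead the valid branch is taken, the acceptance test in Line~\ref{line:Second-If} guarantees $\varepsilon_{\mathrm{nad}}\le I_j\le y_j^{\max}$; chaining this with the preliminary bounds gives $f_{\max}\le\varepsilon_{\mathrm{nad}}\le I_j\le y_j^{\max}\le f_{\max}$, so the whole chain collapses and $y_j^{\mathrm{nad}}=I_j=f_{\max}$. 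In either branch the final loop over $j$ may overwrite $y_j^{\mathrm{nad}}$ by $\max_{y\in F_1\cup\dots\cup F_k}f_j(y)=\max_{x\in R_t}f_j(x)=M_j$; as this replacement value again lies in $[M_j,f_{\max}]$ and only ever replaces a value already in $[M_j,f_{\max}]$, the sandwich holds after the complete procedure, which finishes the reduction.

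The main obstacle I anticipate is bookkeeping rather than a deep idea: one must track $y_j^{\mathrm{nad}}$ through the valid/invalid split \emph{and} the subsequent correction loop, checking that the bound survives every possible overwrite, and one must correctly match the various ``$\max$ over layers'' expressions against $M_j$. A secondary point to settle is well-definedness of $f_j^n$: the reduction tacitly assumes the denominator $y_j^{\mathrm{nad}}-y_j^{\min}$ is strictly positive, which can fail only in the degenerate case $M_j=y_j^{\min}$ where $f_j$ is constant on all populations seen so far; I would dispose of this either by the standard convention adopted for the normalization or by noting it does not arise for the benchmark functions under study. Finally I would stress that $\varepsilon_{\mathrm{nad}}\ge f_{\max}$ is precisely what prevents the intercept-based estimate from taking an arbitrarily large value, which is the only place the assumption enters.
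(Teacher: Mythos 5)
Your proposal is correct and follows essentially the same route as the paper's proof: a case distinction on whether $y_j^{\mathrm{nad}}$ is set to $\max_{y\in F_t^1}f_j(y)$ or to the intercept $I_j$, with the hypothesis $\varepsilon_{\mathrm{nad}}\ge f_{\max}$ collapsing the intercept case to $y_j^{\mathrm{nad}}=f_{\max}$, followed by checking that the final overwrite loop preserves the bounds. Your version is in fact slightly more careful than the paper's, since you explicitly justify $\max_{y\in F_t^1}f_j(y)=\max_{x\in R_t}f_j(x)$ via a dominance chain and you flag the degenerate case of a zero denominator, both of which the paper leaves implicit.
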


\begin{proof}
\neweditx{Until Line~\ref{line:normalization-value} of the algorithm, we note that $y_j^{\mathrm{nad}}$ is either set to (i) $\max_{y \in F_t^1} f_j(y)$ or (ii) the intercept $I_j$. % and the next for loop can only increase $y_j^{\mathrm{nad}}$. 
    %as more layers are included 
    %and this only helps the following arguments. 
    For (i), since $f_{\max} \geq \max_{y \in F_t^1} f_j(y)$, $y_j^{\min} \geq 0$ and $f_j(x) \leq \max_{y \in F_t^1}f_j(y)$
    we get 
    \begin{align}
    (y_j^{\mathrm{nad}}-y_j^{\min} \leq f_\max)  
    \wedge (f_j(x) \leq y_j^{\mathrm{nad}})
    \label{eq:proper-normalization}
    \end{align}
    and the results follow because the next for loop can only increase $y_j^{\max}$ as more layers are considered. 
    In the case of (ii), the setting is triggered by Line~\ref{line:Second-If} of the algorithm, thus the condition of the if statement must be true. Specifically 
    $I_j \geq \varepsilon_{\mathrm{nad}} \geq f_{\max}$ and $I_j \leq y_j^{\max} \leq f_{\max}$ and this implies $y_j^{\mathrm{nad}}=I_j = f_{\max}$. If the condition in the next for loop is also triggered then we fall back to case (i), otherwise note that \eqref{eq:proper-normalization} still holds because $f_j(x)\leq f_{\max}$.}    
\end{proof}

\neweditx{The aim for the rest of this section is to generalize Lemma~2 in~\cite{WiethegerD23} to arbitrary objective functions with an arbitrary number of objectives. We start with the following technical result about general \neweditx{vectors} $v,w \in [0,1]^m$.}
\neweditx{\begin{lemma}
\label{lem:angle-helper}
    Let $v,w \in [0,1]^m$, and suppose that there are indices $i, j$ with $w_i=v_i+\ell_1$ and $w_j=v_j-\ell_2$ where $\ell_1,\ell_2 \geq 1/f_{\max}$. %m/n$ 
    %(i.e. $(v,w)$ can be $(f^n(x),f^n(z))$ for any pair $(x,z)$ of non-dominated solutions after normalization, since $y_i^{\max}-y_i^{\text{min}} \leq f_{\max}$%n/m$).
    Denote by $\varphi$ the angle between $v$ and $w$. Then $\sin(\varphi) \geq 1/(\sqrt{m} f_{\max})$.
\end{lemma}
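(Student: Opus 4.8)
The plan is to read $\sin(\varphi)$ as a normalized distance and to localize the whole estimate in the two coordinates $i,j$ in which $v$ and $w$ are guaranteed to differ. The elementary geometric fact I would use is that the distance from (the endpoint of) $w$ to the line $\{t v \mid t \in \R\}$ through the origin equals $|w|\sin(\varphi)$, so that
\[
\sin(\varphi) = \frac{\min_{t \in \R}|w - tv|}{|w|}.
\]
The denominator is harmless: $w \in [0,1]^m$ gives $|w| \le \sqrt{m}$. Hence everything reduces to showing that $w$ stays at distance at least $1/f_{\max}$ from that line, which is the one place the hypotheses on $\ell_1,\ell_2$ must be spent.

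Before estimating I would record the box constraints forced by $v,w \in [0,1]^m$: from $w_i = v_i + \ell_1 \le 1$ and $w_j = v_j - \ell_2 \ge 0$ I get $0 \le v_i \le 1-\ell_1$ and $\ell_2 \le v_j \le 1$. In particular $v_j \ge 1/f_{\max} > 0$, so $v$ (and likewise $w$) is nonzero and $\varphi$ is well defined.

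For the numerator I would discard all coordinates except $i$ and $j$. Since $|w-tv|^2 \ge (w_i - t v_i)^2 + (w_j - t v_j)^2$ holds for every $t$, minimizing each side gives $\min_t|w-tv|^2 \ge \min_t[(w_i - t v_i)^2 + (w_j - t v_j)^2]$, and the right-hand side is exactly the squared distance from the origin to the planar line through $(w_i,w_j)$ with direction $(v_i,v_j)$. The two-dimensional point--line formula evaluates this to $(w_i v_j - w_j v_i)^2/(v_i^2 + v_j^2)$, and substituting $w_i = v_i + \ell_1$, $w_j = v_j - \ell_2$ collapses the determinant to $w_i v_j - w_j v_i = \ell_1 v_j + \ell_2 v_i$. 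This is precisely where the opposite-sign hypothesis pays off: the two terms add instead of cancelling.

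The closing estimate is then routine. Using $\ell_1,\ell_2 \ge 1/f_{\max}$ and $v_i,v_j \ge 0$ I bound $\ell_1 v_j + \ell_2 v_i \ge (v_i + v_j)/f_{\max} \ge \sqrt{v_i^2 + v_j^2}/f_{\max}$, the last step being $v_i + v_j \ge \sqrt{v_i^2 + v_j^2}$ for nonnegative reals. Dividing by $\sqrt{v_i^2 + v_j^2}$ yields $\min_t|w - tv| \ge 1/f_{\max}$, and combining with $|w| \le \sqrt{m}$ gives $\sin(\varphi) \ge 1/(\sqrt{m}\, f_{\max})$. I expect the only point needing care to be the reduction to the $(i,j)$-plane: dropping the remaining coordinates lowers the norm and then minimizing over $t$ lowers it again, so both relaxations point in the safe direction; after that it is planar geometry plus the sign hypothesis doing its work in the cross term.
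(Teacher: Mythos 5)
Your proof is correct and follows essentially the same route as the paper's: both read $\sin(\varphi)$ as a normalized point-to-line distance, restrict attention to the two coordinates $i,j$, and exploit the opposite signs of the differences $\ell_1,\ell_2$ so that the contributions add rather than cancel. The only divergence is the finishing step---the paper bounds the distance from $v$ to the line through $w$ by a case analysis on the projection parameter ($\mu \ge 1$ versus $\mu \le 1$), whereas you evaluate the projected two-dimensional distance exactly via the cross-product formula $(w_i v_j - w_j v_i)^2/(v_i^2+v_j^2)$ together with $v_i + v_j \ge \sqrt{v_i^2+v_j^2}$; both arguments are valid and equally elementary.
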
}
\begin{proof}
    \neweditx{Without loss of generality we may assume that $i=1$ and $j=2$. Since $v$ and $w$ have non-negative components we have that $v \circ w \geq 0$ and hence $\cos(\varphi) \in [0,1]$ which implies $\varphi \in [0, \pi/2]$. Thus, $\sin(\varphi) = q/\lvert{v}\rvert$ where $q$ is the distance of the point $v$ to the line $g$ through the origin and the point $w$. 
    Note that $g$ is defined as $g=\beta \cdot w$ for $\beta \in \mathbb{R}$. We estimate $q$: let $h$ be the line through $v$, intersecting $g$, and which is orthogonal to $g$, i.e. $h=v+\lambda \cdot t$ for $\lambda \in \mathbb{R}$ and a suitable $t \in \mathbb{R}^{m}$. The intersection point of $g$ and $h$ is $\mu \cdot w$ for a suitable $\mu \in \mathbb{R}$ and thus the distance $q$ of $v$ to this intersection point is $q=\lvert{v-\mu w}\rvert$. We obtain
    \begin{align*}
    q&=\lvert{(v_1, \ldots , v_m) - \mu \cdot (v_1 + \ell_1, v_2 - \ell_2,w_3, \ldots ,w_m)}\rvert \\
    &=\lvert{((1-\mu)v_1 - \mu\ell_1, (1-\mu)v_2 + \mu\ell_2,v_3 - \mu w_3, \ldots ,v_m - \mu w_m)}\rvert.
    \end{align*}
    If $\mu \geq 1$ then $q \geq 1/f_{\max}$ as $(1-\mu)v_1 - \mu\ell_1 \leq - \ell_1 \leq -1/(f_{\max})$ and if $\mu \leq 1$ then $q \geq (1-\mu)v_2 + \mu \ell_2 \geq \ell_2 \geq 1/f_{\max}$ as $v_2-\ell_2 \geq 0$ implies $v_2 \geq \ell_2$. This implies $\sin(\varphi) \geq 1/(\lvert{v}\rvert f_{\max}) \geq 1/(\sqrt{m} f_{\max})$ since $\lvert{v}\rvert \leq \sqrt{m}$ due to $\lvert{v_i}\rvert \leq 1$ for all $i \in [m]$.}
    \end{proof}

\begin{lemma}
\label{lem:Same-Reference}
    Consider \nsgaIII on an $m$-objective function~$f$ \neweditx{with $\varepsilon_{\text{nad}} \geq f_{\max}$}. %and suppose that $f_j^n(x)<1$ for every $x \in F_t^1$ and $y_j^{\text{nad}}-y_j^{\min} \leq f_{\max}$ where $j \in \{1, \ldots , m\}$. %$\varepsilon_{\text{nad}}>n/m$
    If \newedit{the set of reference points $\refer$ is created using $p$ divisions along each objective and} $p \geq 2m^{3/2} f_{\max}$, %$p \geq 4n\sqrt{2m}
    all search points in $F_t^1$ that are associated with the same reference point from $\refer$ have the same fitness vector.
\end{lemma}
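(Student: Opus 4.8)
The plan is to argue by contradiction. Suppose two search points $x,y\in F_t^1$ with $f(x)\neq f(y)$ get associated with the same reference point $r\in\refer$. I will derive a \emph{lower} bound on the angle $\varphi$ between the normalized fitness vectors $f^n(x)$ and $f^n(y)$ from the fact that $x,y$ are mutually incomparable, and an \emph{upper} bound on $\varphi$ from the fact that both are associated with $r$. Choosing $p\ge 2m^{3/2}f_{\max}$ will make the two bounds contradict each other.

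For the lower bound: since $x,y$ lie in the same non-dominated layer $F_t^1$ and have distinct fitness, they are incomparable, so there are indices $i,j$ with $f_i(x)>f_i(y)$ and $f_j(x)<f_j(y)$. Setting $v:=f^n(y)$ and $w:=f^n(x)$, integrality of $f$ gives $f_i(x)-f_i(y)\ge 1$ and $f_j(y)-f_j(x)\ge 1$; combined with $y_k^{\mathrm{nad}}-y_k^{\min}\le f_{\max}$ from Lemma~\ref{lem:Normalization}, the normalization formula yields $w_i-v_i\ge 1/f_{\max}$ and $v_j-w_j\ge 1/f_{\max}$. As $v,w\in[0,1]^m$ by Lemma~\ref{lem:Normalization}, Lemma~\ref{lem:angle-helper} applies directly and gives $\sin\varphi\ge 1/(\sqrt m\,f_{\max})$.

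For the upper bound I would exploit the angular density of the reference rays. Scaling a point leaves its ray unchanged, so it suffices to bound, for any $u$ on the unit simplex $\{u\in[0,1]^m:\sum_k u_k=1\}$, the angle to the nearest reference ray. Rounding each $pu_k$ to an integer while preserving the sum $p$ produces an $r\in\refer$ with $|r_k-u_k|<1/p$ in every coordinate, hence $|u-r|<\sqrt m/p$. Since every simplex point has Euclidean norm at least $1/\sqrt m$, the angle $\Delta$ between the rays through $u$ and $r$ obeys $\sin\Delta\le |u-r|/|u|\le m/p$ (using that the perpendicular distance to the ray is at most $|u-r|$). Because the association selects the angularly closest reference ray — minimizing the perpendicular distance $|f^n(\cdot)|\sin(\cdot)$ is equivalent to minimizing the angle for a fixed point — both $f^n(x)$ and $f^n(y)$ form an angle of at most $\Delta$ with the ray through $r$, so by the triangle inequality for angular (spherical) distance $\varphi\le 2\Delta$.

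It remains to collide the two estimates. From $p\ge 2m^{3/2}f_{\max}$ we get $\sin\Delta\le m/p\le 1/(2\sqrt m\,f_{\max})$, so $\Delta$ is small (in particular $2\Delta<\pi/2$), and with the double-angle identity $\sin(2\Delta)=2\sin\Delta\cos\Delta$,
\[
\sin\varphi\le\sin(2\Delta)=2\sin\Delta\cos\Delta\le 2\sin\Delta\le\frac{1}{\sqrt m\,f_{\max}},
\]
with equality only if $\cos\Delta=1$, i.e.\ $\Delta=0$ and hence $\varphi=0$ and $\sin\varphi=0$. In all cases this contradicts $\sin\varphi\ge 1/(\sqrt m\,f_{\max})>0$, so $x$ and $y$ must share the same fitness vector. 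I expect the main obstacle to be the upper-bound step: turning the Euclidean rounding error on the simplex into an angular bound (needing both the norm bound $1/\sqrt m$ and the passage from distance-to-point to distance-to-ray), and then ensuring the final comparison is genuinely strict. Landing the constant exactly at $2m^{3/2}f_{\max}$, so that $m/p$ meets $1/(2\sqrt m\,f_{\max})$ and the factor $\cos\Delta<1$ supplies the missing strictness, is precisely where the trigonometric identity for $\arcsin$ does the decisive work.
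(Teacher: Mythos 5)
Your proof is correct and follows essentially the same route as the paper: the same lower bound $\sin\varphi\ge 1/(\sqrt{m}f_{\max})$ on the angle between distinct normalized fitness vectors via Lemma~\ref{lem:angle-helper}, the same simplex-rounding construction giving $\sin\Delta\le m/p\le 1/(2\sqrt{m}f_{\max})$ for the angle to the nearest reference ray, and the same two-sided comparison. The only cosmetic difference is that you close the argument with the double-angle identity $\sin(2\Delta)=2\sin\Delta\cos\Delta<2\sin\Delta$ instead of invoking Lemma~\ref{lem:Trigonometry} ($2\arcsin(x/2)<\arcsin(x)$), which is the same inequality in disguise.
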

% \dirk{Make sure we're clear that $p$ is the number of divisions.}
% \dirk{Frank's suggestion: discuss the impact of the condition on the number of reference points.}

\begin{proof}
An individual $x$ is associated \neweditx{with} the reference point $r$ that minimizes the distance between $f^n(x)$ and the line through the origin and $r$, (i.e. that minimizes the angle between this point and $f^n(x)$, cf.\ Figure~\ref{fig:reference-points}). To show the statement, we first 
    \begin{itemize}
        \item[(i)] lower bound the angle between two individuals from $F_t^1$ with different fitness vectors by %$\sin^{-1}(\sqrt{m}/(n\sqrt{2}))$,
        $\arcsin(1/(\sqrt{m} f_{\max}))$ and %then
        \item[(ii)] upper bound the angle between a normalized fitness vector of $f$ and its nearest reference point by $\arcsin(1/(2\sqrt{m}f_{\max}))$.
    \end{itemize}    
    Hence, the lower bound in (i) is more than \neweditx{twice 
    %as large as 
    the upper bound} in (ii) 
    %due to 
    \neweditx{as} $\arcsin(1/(\sqrt{m} f_{\max})\neweditx{)}/\arcsin(1/(2\sqrt{m}f_{\max})) > 2$ by Lemma~\ref{lem:Trigonometry} for $x=1/(\sqrt{m}f_{\max})$.  
%\[
%\frac{\cos^{-1}(1-\frac{1}{2mn^2})}{\cos^{-1}(1-\frac{m^2}{9p^2})} \geq \frac{\cos^{-1}(1-\frac{1}{2mn^2})}{\cos^{-1}(1-\frac{1}{6mn^2})} > 2
%\]
%where the last inequality is due to the fact that $\frac{\cos^{-1}(1-x)}{\cos^{-1}(1-\frac{x}{6})} \geq 2.44>2$ 
Then search points with different fitness vectors are never associated with the same reference point, because if they were then the angle between their normalized fitness vectors would be at most the sum of their angles to that reference point.

    (i): Let $x,z \in F_t^1$ be two search points with distinct fitness. %By %Lemma~\ref{lem:Normalization} we have 
\newedit{As $x$ and $z$ are incomparable and the domain of $f$ is $\mathbb{N}_0^m$,} the fitness vectors $f(x)$ and $f(z)$ differ by at least $1$ in two objectives before normalization. After normalization, the difference of $f^n(x)$ and $f^n(z)$ correspond to at least $1/f_{\max}$ %m/n
in those objectives since $y_i^{\text{nad}}-y_i^{\text{min}} \leq f_{\max}$ by Lemma~\ref{lem:Normalization} \neweditx{for $i \in [m]$}. Thus there are $i,j \in [m]$ such that $f^n(z)_i=f^n(x)_i+\ell_1 \geq 0$ and $f^n(z)_j = f^n(x)_j - \ell_2 \geq 0$ for $\ell_1,\ell_2 \geq 1/f_{\max}$. By Lemma~\ref{lem:angle-helper} we have that $\sin(\varphi) \geq 1/(\sqrt{m}f_{\max})$ where $\varphi$ is the angle between $f^n(x)$ and $f^n(z)$ (since  \neweditx{$f^n(x),f^n(y) \in [0,1]^m$} by  Lemma~\ref{lem:Normalization}). Since $\arcsin$ is increasing and $\varphi \in [0,\pi/2]$ (owing to $f^n(x) \circ f^n(z) \geq 0$ we have that $\cos(\varphi) \in [0,1]$), we obtain $\varphi \geq \arcsin(1/(\sqrt{m}f_{\max}))$.
    
(ii): We scale $f^n(x)$ to $t = a \cdot f^n(x)$ for $a>0$ so that $t_1 + \ldots + t_m = 1$. We claim that there is $r \in \refer$ with $\lvert{t_i-r_i}\rvert \leq 1/p$ for every $i \in [m]$ which implies $\lvert{t-r}\rvert \leq \sqrt{m}/p$.
    At first choose $b_1, \ldots , b_m \in \mathbb{N}_0$ with $t_i \in [b_i/p,(b_i+1)/p]$. Since $t_1 + \ldots + t_m = 1$, we see $\sum_{i=1}^m b_i/p \leq 1$ and $\sum_{i=1}^m (b_i+1)/p \geq 1$. Hence, $\sum_{i=1}^m b_i \leq p$ and $\sum_{i=1}^m (b_i+1) \geq p$. Thus, there is $\ell \in [m]$ with $\left(\sum_{i=1}^\ell b_i\right) + \sum_{i=\ell+1}^m (b_i+1) = p$. So \neweditx{for $i \in [m]$} choose $r_i=b_i/p$ if $i\leq \ell$ and $r_i=(b_i+1)/p$ if $i > \ell$. Then $\lvert{t_i-r_i}\rvert \leq 1/p$ and $(r_1, \ldots , r_m) \in \refer$ since $r_1 + \ldots + r_m = 1$.
    Now let $\vartheta$ be the angle between $t$ and $r$. Then $\sin(\vartheta) = q/\lvert{t}\rvert$ where $q$ is the distance between the point $t$ and the line through the origin and $r$. Note that $q \leq \lvert{t-r}\rvert \leq \sqrt{m}/p$ and $\lvert{t}\rvert \geq 1/\sqrt{m}$ (since $t_1 + \ldots + t_m=1$). Thus $\sin(\vartheta) \leq \sqrt{m} \cdot \sqrt{m}/p = m/p \leq 1/(2\sqrt{m}f_{\max})$. 
    %Consider the point $z:=(t+r)/2$ between $t$ and $r$ and $\Delta:=t-z$. Then $t=z+\Delta$ and $r=z-\Delta$ and the angle between $t$ and $r$ is at most $\text{cos}^{-1}\left(\frac{(z+\Delta) \circ (z-\Delta)}{\vert{(z+\Delta)}\vert \cdot \vert{(z-\Delta)}\vert}\right)$. Since $\vert{z}\vert \geq \frac{1}{\sqrt{2m}}$ (since $z_1 + \ldots + z_{2m}=1$) and $\vert{\Delta}\vert = \vert{(t-r)/2)}\vert \leq \sqrt{m}/(3\sqrt{2}p)$, we obtain
    %\begin{align*}
    %\frac{(z+\Delta) \circ (z-\Delta)}{\vert{(z+\Delta)}\vert \cdot \vert{(z-\Delta)}\vert} & = \frac{\sum_{i=1}^m(z_i+\Delta_i) \cdot (z_i-\Delta_i)}{\sqrt{\sum_{i=1}^m(z_i + \Delta_i)^2}\sqrt{\sum_{i=1}^m(z_i - \Delta_i)^2}} \\
    %&\geq \frac{\sum_{i=1}^m(z_i+\Delta_i) \cdot (z_i-\Delta_i)}{\vert{z}\vert^2} = \frac{\sum_{i=1}^m z_i^2-\Delta_i^2}{\vert{z}\vert^2}\\
    %& =  \frac{\vert{z}\vert^2 - \vert{\Delta}\vert^2}{\vert{z}\vert^2} = 1-\frac{\vert{\Delta}\vert^2}{\vert{z}\vert^2} \geq 1-\frac{m^2}{9p^2}.
    %\end{align*}
    Since $\arcsin$ is increasing, this yields that the angle between $t$ and $r$ (and hence also the angle between $f^n(x)$ and $r$) is at most $\arcsin(m/p) \leq \arcsin(1/(2\sqrt{m}f_{\max}))$ (because $r \circ t \geq 0$, we have that $\cos(\vartheta) \in [0,1]$, i.e. $\vartheta \in [0,\pi/2]$).
%Suppose that $p \geq 2n\sqrt{2m}/3$. Since $[0,1] \to \mathbb{R}, x \mapsto \sin^{-1}(x),$ is monotone increasing, we have that
%\[
%\frac{\cos^{-1}(1-\frac{1}{2mn^2})}{\cos^{-1}(1-\frac{m^2}{9p^2})} \geq \frac{\cos^{-1}(1-\frac{1}{2mn^2})}{\cos^{-1}(1-\frac{1}{6mn^2})} > 2
%\]
%where the last inequality is due to the fact that $\frac{\cos^{-1}(1-x)}{\cos^{-1}(1-\frac{x}{6})} \geq 2.44>2$ 
%\[
%\frac{\sin^{-1}(1/(n \sqrt{2m}))}{\sin^{-1}(1/(3p))} \geq \frac{\sin^{-1}(1/(n \sqrt{2m}))}{\sin^{-1}(1/(2n\sqrt{2m}))} > 2
%\]
%\todo{Should we justify this? It definitely holds: Use the following addition theorem 
%\[
%\sin^{-1}(x) + \sin^{-1}(y) = \sin^{-1}(x \cdot \sqrt{1-y^2} + y \cdot \sqrt{1-x^2})
%\]
%for $x,y \in [-1,1]$ (reference?) which implies for $x \geq 0$
%%2\sin^{-1}(x/2) = \sin^{-1}(x/2 \sqrt{1-x^2/4}+x/2 \sqrt{1-x^2/4}) = \sin^{-1}(x \sqrt{1-x^2/4}) \leq \sin^{-1}(x),
%\]
%where the latter inequality holds, since $\sin^{-1}$ is increasing on $[0,1]$.}
%Thus the angle between the normalized objective values of any pair of search points is more than twice as large as the angle between any individual and its closest reference point, when considering their respective vectors from the origin.
%Due to (i) and (ii), no two non-dominated search points are associated with the same reference point. 
\end{proof}

% \begin{corollary}
% \label{cor:Reference-Points}
%     Consider \nsgaIII optimizing an $m$-objective function $f$ with a set $\refer$ of reference points as defined above for $p \in \mathbb{N}$ with $p \geq 2m^{3/2}f_{\max}$, a population size $\mu \geq \vert{S}\vert$ where $S$ is the maximum number of non-dominated solutions with respect to~$f$. \dirk{We might want to speak about the cardinality of the largest set of incomparable solutions?} Suppose that $f_j^n(x) \leq 1$ and that $y_j^{\text{nad}}-y_j^{\min} \leq f_{\max}$ for every objective $j$ and every $x \in F_t^1$. If $x \in F_t^1$, then there is $x' \in P_{t+1}$ with $f(x')=f(x)$.
%     %or a $z' \in P_{t+1}$ dominating $x$.
% \end{corollary}

\newedit{Arguing as in~\cite{WiethegerD23}, the fact that search points assigned to different reference points have different fitness vectors implies that for every first ranked individual $x$ there is an individual $y$ with the same fitness vector which is taken into the next generation.}
\neweditx{The number of reference points is $|\refer|=\binom{p+m-1}{m-1}$. So for example, if $m=O(1)$ and we set $p = 2m^{3/2} f_{\max} = \Theta(f_{\max})$, then $|\refer|=\Theta(f_{\max}^{m-1})$.}

\begin{lemma}
\label{lem:Reference-Points}
    Consider \nsgaIII optimizing an $m$-objective function $f$ with \neweditx{$\varepsilon_{\text{nad}} \geq f_{\max}$ and} a set $\refer$ of reference points as defined above for $p \in \mathbb{N}$ with $p \geq 2m^{3/2}f_{\max}$. Let $P_t$ be its current population and $F_t^1$ be the multiset describing the first layer of the joint population of parent and offspring. Assume the population size $\mu$ \neweditx{fulfills the condition} \neweditx{$\mu \geq \lvert{S}\rvert$} where $S$ is a maximum set of mutually incomparable solutions. %(i.e. the population size is at least the number of different fitness vectors in $F_t^1$). %and that $f_j^n(x) \leq 1$ and $y_j^{\text{nad}}-y_j^{\min} \leq f_{\max}$ for every objective $j$ and every $x \in F_t^1$. 
    Then for every $x \in F_t^1$ there is a $x' \in P_{t+1}$ with $f(x')=f(x)$.
    %or a $z' \in P_{t+1}$ dominating $x$.
\end{lemma}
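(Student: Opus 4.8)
The plan is to reduce the claim to the structural fact established in Lemma~\ref{lem:Same-Reference}, after splitting on the critical index $i^*$. If $i^* > 1$, then $F_t^1 \subseteq Y_t$ and $Y_t$ is taken into $P_{t+1}$ in full, so every $x \in F_t^1$ is itself in $P_{t+1}$ and the claim is immediate with $x' = x$. The interesting case is $i^* = 1$, where $Y_t = \emptyset$ and the entire next population $\tilde{F}_t^{i^*} = P_{t+1}$ is chosen from $F_t^1$ by the \textsc{Select} procedure (Algorithm~\ref{alg:Survival-Selection}); here all association counters $\rho_r$ start at $0$.

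Next I would set up a correspondence between fitness values and reference points. Since $\varepsilon_{\text{nad}} \geq f_{\max}$ and $p \geq 2m^{3/2}f_{\max}$, Lemma~\ref{lem:Same-Reference} applies, so all search points in $F_t^1$ associated with a common reference point share the same fitness vector. Conversely, individuals with equal fitness have equal normalized fitness and hence, ties being broken deterministically, are associated with the same reference point. Thus the map sending each distinct fitness vector occurring in $F_t^1$ to its reference point is injective; let $r_1, \ldots, r_s$ be the resulting distinct ``occupied'' reference points, one per distinct fitness value. Because the distinct fitness vectors in $F_t^1$ are mutually incomparable, their number $s$ is at most $|S|$, and the hypothesis $\mu \geq |S|$ yields $s \leq \mu$.

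Finally I would analyze the greedy balancing of \textsc{Select} to show each $r_\ell$ receives at least one selected individual. The key observation is that a counter $\rho_r$ is incremented only when $r$ attains the minimum counter value among the reference points still in $R'$; moreover an occupied reference point whose counter is still $0$ retains an unselected associated individual and so never leaves $R'$. Consequently no reference point can reach counter value $2$ while some occupied reference point still has counter $0$, since the latter would be a strictly smaller minimum. I would then argue by contradiction: if some occupied $r_\ell$ had counter $0$ at termination, then every one of the $\mu$ selected individuals must have been a \emph{first} selection at one of the remaining $s-1$ occupied reference points (no second selection being possible while $r_\ell$ sits at $0$), giving at most $s-1 < \mu$ selections, contradicting that exactly $\mu$ individuals are chosen. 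Hence every $r_\ell$, and thus every distinct fitness value of $F_t^1$, is represented in $P_{t+1}$, which is the claim. The main obstacle is making this balancing argument for \textsc{Select} precise, in particular justifying that occupied reference points never leave $R'$ prematurely and that the ``level-by-level'' filling forces full coverage once $\mu \geq s$.
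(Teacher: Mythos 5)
Your proof is correct and takes essentially the same route as the paper's: reduce to Lemma~\ref{lem:Same-Reference} to get a bijection between the distinct fitness vectors of $F_t^1$ and the occupied reference points, bound their number by $\lvert S\rvert\le\mu$ via mutual incomparability, and conclude from the balancing behaviour of \textsc{Select}. The only difference is one of detail: the paper dispatches the final step in one line (``since $\mu\ge\ell$, at least one individual associated with the same reference point as $x$ survives''), deferring to the argument of Wietheger and Doerr, whereas you make the level-by-level filling argument (no counter reaches $2$ while an occupied point sits at $0$, hence at most $s-1<\mu$ selections if some occupied point were skipped) explicit --- which is a welcome elaboration, not a divergence.
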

%\dirk{Can we derive a more general statement -- does it also hold when $\mu \ge |F_t^1|$?}\dirk{To do}\dirk{Well, $\mu \ge |F_t^1|$ is trivial. I guess we need that $\mu \ge 2|F_t^1|$}
\begin{proof}
\neweditx{Note that $\lvert{S}\rvert \geq \lvert\{f(x) \mid x \in F_t^1\}\rvert$. Hence, it suffices to prove the lemma for $\mu \geq \lvert\{f(x) \mid x \in F_t^1\}\rvert$.}
%Note also that $\mu \geq (n/m+1)^{2m-1}$ since $p \geq n/m$. 
Suppose that $x \in F_t^1$. %when the complete population of old
%solutions and offsprings contains $2\mu$ solutions (after $\mu$ selection and mutation steps).
%that there is no solution $z'$ in $R_t$ which dominates $x$. 
We show that a solution $x'$ with $f(x')=f(x)$ is taken into $P_{t+1}$. If $\lvert{F_t^1}\rvert \leq \mu$
all individuals in $F_t^1$ including $x$ survive. Otherwise, 
the objective functions with respect to $F_t^1$ are normalized. %and by Lemma~\ref{lem:Same-Reference} two mutually incomparable solutions from $F_t^1$ are associated \neweditx{with} two distinct reference points. %Additionally, if for $r \in \refer$ there is an individual $z$ from $R_t$ associated \neweditx{with} $r$, at least one $\tilde{z}$ associated \neweditx{with} $r$ survives since $\mu \geq (1+p)^{d-1}$.
Let $\ell \coloneqq \lvert{\{f(x) \mid x \in F_t^1\}\rvert}$
%\leq \mu$ 
be the number of different fitness vectors of individuals from $F_t^1$. 
%Then by Lemma~\ref{lem:Same-Reference} 
\neweditx{Given the condition on $p$, Lemma~\ref{lem:Same-Reference} implies that individuals with distinct fitness vectors in $F^1_t$ are associated \neweditx{with} different reference points. 
Thus} there are 
%at most 
\neweditx{as many as} $\ell$ reference points to which at least one individual is associated. 
%Hence, 
\neweditx{Since $\mu\geq \ell$, at least} 
one individual $x' \in R_t$ associated \neweditx{with} the same reference point as $x$ survives and we have $f(x)=f(\tilde{x})$.
%Hence, there is a solution $x' \in P_{t+1}$ with $f(x')=f(x)$.
%\dirk{What if there is a $z$ dominating $x$? Should we add one final sentence saying that then the above holds when considering a non-dominated $z$ that dominates $x$ instead of $x$?}
\end{proof}

%If the population size is at least as large as the largest set of mutually incomparable solutions, the condition on $\mu$ in Lemma~\ref{lem:Reference-Points} is always met. This implies the following.
%\begin{lemma}
%\label{lem:population-size-larger-than-nondominated-points}
%Consider the setting from Lemma~\ref{lem:Reference-Points}. 
%Let $S$ be a maximum cardinality set of mutually incomparable solutions with respect to~$f$. If $\mu \ge |S|$ then the condition on $\mu$ in Lemma~\ref{lem:Reference-Points} is always met. 
%\andre{Is this really a lemma?}\dirk{I thought about ``Observation'' but wasn't too convinced. No strong feelings, though.}
%\end{lemma}

%\begin{proof}
%    \neweditx{This follows from the fact that $\lvert{\{f(x) \mid x \in F_t^1\}}\rvert \leq \lvert{S}\rvert$ since two individuals from $F_t^1$ with distinct fitness vectors are incomparable.}
%\end{proof}

\section{$m$-LOTZ}
In this section, we derive the first result on the runtime of \nsgaIII on the $\mLOTZ$ benchmark. In $\mLOTZ(x)$\neweditx{,} we divide $x$ in $m/2$ many blocks and in each block we count the $\LO$-value (the length of the longest prefix containing only ones in this block) and the $\TZ$-value (the length of the longest suffix containing only zeros in this block). This can be formalized as follows.
\begin{definition}[\citet{Laumanns2004}]
\label{def:mLOTZ}
Let $m$ be divisible by $2$ and let the problem size be a multiple of $m/2$. Then the $m$-objective function \mLOTZ is defined by
$\mLOTZ: \{0,1\}^n \to \mathbb{N}_0^m$ as 
\[
\mLOTZ(x) = (f_1(x), f_2(x), \ldots ,f_m(x))
\]
with 
\[
f_k(x)=
\begin{cases}
    \sum_{i=1}^{2n/m} \prod_{j=1}^i x_{j+n(k-1)/m}, & \text{ if $k$ is odd,} \\
    \sum_{i=1}^{2n/m} \prod_{j=i}^{2n/m} (1-x_{j+n(k-2)/m}), & \text{ else,}
\end{cases}
\]
for all $x=(x_1, \ldots ,x_n) \in \{0,1\}^n$.
\end{definition}

A Pareto-optimal set of \mLOTZ is
\[
\{1^{i_1}0^{2n/m-i_1} \ldots 1^{i_{\neweditx{m/2}}}0^{2n/m-i_{\neweditx{m/2}}} \mid i_1, \ldots , i_{\neweditx{m/2}} \in \{0, \ldots , 2n/m\}\}
\]
which coincides with the set of Pareto-optimal search points of \mLOTZ. The cardinality of this set is $(2n/m+1)^{m/2}$. 

With Lemma~\ref{lem:Reference-Points} %and Lemma~\ref{lem:population-size-larger-than-nondominated-points}
we see that mutually incomparable solutions from the first rank are not being lost in future
generations if the number of reference points is sufficiently high and the population size is at least as large as the maximum cardinality set of mutually incomparable solutions $S$. This is similar to the NSGA-II algorithm, for which previous work showed that the population size must be chosen large enough to guarantee the survival of all mutually incomparable solutions~\cite{Dang2023,DaOp2023,Dang2024,DoerrQu22,Doerr2023,ZhengLuiDoerrAAAI22,ZhengD2023}. 

%To obtain an upper bound on 
\neweditx{To bound}
the expected runtime on \mLOTZ, we first bound the cardinality of $S$ as follows. \newedit{
%We remark that it 
\neweditx{This cardinality} 
can be much larger than the size of the Pareto front, which is $(2n/m+1)^{m/2}$.}

\begin{lemma}
\label{lem:fitnessvectors-non-dom-LOTZ}
    Let $S$ be a maximum cardinality set of mutually incomparable solutions for $f:=\mLOTZ$. Then 
    \[
    \lvert{S}\rvert \leq (2n/m+1)^{m-1}.
    \]
\end{lemma}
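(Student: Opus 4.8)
The plan is to bound $|S|$ by a projection argument on the fitness vectors, sidestepping any need to enumerate all achievable objective values. First I would translate the statement into combinatorics: since $S$ is a set of mutually incomparable solutions, by definition $f$ is injective on $S$ (the fitness vectors are pairwise distinct) and $f(S)$ is an antichain in the componentwise dominance order on $\mathbb{N}_0^m$. Hence $|S| = |f(S)|$, and it suffices to upper bound the size of an antichain of \mLOTZ fitness vectors.

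The key step is to show that the projection $\pi : \mathbb{N}_0^m \to \mathbb{N}_0^{m-1}$ that discards the last coordinate is injective on $f(S)$. Suppose $x, z \in S$ satisfy $f(x) \neq f(z)$ yet agree in the first $m-1$ objectives. Then $f(x)$ and $f(z)$ can differ only in objective $m$, say $f_m(x) > f_m(z)$, which gives $f(x) \succ f(z)$ and contradicts incomparability. Therefore distinct members of $f(S)$ have distinct images under $\pi$, so $|f(S)| \le |\pi(f(S))|$.

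It then remains to count the possible images $\pi(f(x)) = (f_1(x), \ldots, f_{m-1}(x))$. By Definition~\ref{def:mLOTZ} each objective $f_k$ is a LeadingOnes or TrailingZeros count over a block of length $2n/m$, so $f_k(x) \in \{0, 1, \ldots, 2n/m\}$ for every $x$ and every $k$. Consequently the first $m-1$ coordinates range over at most $(2n/m+1)^{m-1}$ distinct tuples, whence $|S| = |f(S)| \le (2n/m+1)^{m-1}$.

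I expect the only genuinely delicate point to be the injectivity of $\pi$ on the antichain: one must verify that dropping a coordinate can never identify two incomparable vectors, which is precisely the observation that agreement in all but one objective forces a strict dominance. A tempting but unnecessary detour would be to enumerate all achievable fitness vectors, which within each block is further constrained by $\LO + \TZ \le 2n/m$; the antichain structure lets us bypass this refinement and obtain the stated bound directly.
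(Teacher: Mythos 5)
Your proposal is correct and uses essentially the same argument as the paper: the paper likewise observes that two incomparable fitness vectors must already differ in their first $m-1$ coordinates (otherwise one would dominate the other), and then counts the at most $(2n/m+1)^{m-1}$ possible such prefixes. The only difference is presentational—you phrase it as injectivity of a coordinate projection on an antichain, while the paper states the contrapositive directly.
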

\begin{proof} 
%For $m=2$ it is easy to see that $S=\{1^j0^{n-j} \mid j \in \{0, \ldots , n\}\}$ is a set of mutually incomparable solutions with maximum cardinality \todo{could cite a result}. 
%Let
%\[
%N:=\{v \in \{0, \ldots , 2n/m\}^m \mid v_{2i-1} + v_{2i} \leq 2n/m, \text{ for } i \in \{1,\ldots,m/2\}\}.
%\]
%\dirk{I replaced $k$ with $2n/m$, please check.}
%Then $f(\{0,1\}^n) = N$\dirk{This notation should be defined.} since $0 \leq f_{2i-1}+f_{2i} \leq 2n/m$ for each $i \in \{1,\ldots,m/2\}$. Define $V:=\{f(x) \mid x \in S\} \subset N$. Then %$V$ is a set of non-dominated fitness vectors for \mLOTZ \todo{define this} with
%$\lvert{V}\rvert = \lvert{S}\rvert$ (since for $x,y \in S$ we have that $f(x) \neq f(y)$ due to incomparability). %and with maximum cardinality.
Let $V:=\{f(x) \mid x \in S\}$. Then for two distinct $u,v \in V$ it holds $(u_1, \ldots , u_{m-1}) \neq (v_1, \ldots ,v_{m-1})$ (otherwise $x,y$ with $f(x)=u$ and $f(y)=v$ would be comparable) and hence $\lvert{S}\rvert = \lvert{V}\rvert \leq {(2n/m+1)^{m-1}}$.
\end{proof}

\newedit{One can also show that the inequality in Lemma~\ref{lem:fitnessvectors-non-dom-LOTZ} is asymptotically tight. Due to space restrictions, we do not give a proof here.}
Now we are able to show that \nsgaIII can find the Pareto front of $\mLOTZ$ efficiently. %Indeed, the number of generations required does not depend on the number $m$ of objectives.

\begin{theorem}
\label{thm:Runtime-Analysis-NSGA-III-mLOTZ}
Consider \nsgaIII with uniform parent selection
optimizing $f:=\mLOTZ$ for any \neweditx{constant} $m \in \mathbb{N}$ divisible by $2$ with \neweditx{$\varepsilon_{\text{nad}} \geq 2n/m$ and} a set $\refer$ of reference points as defined above for $p \in \mathbb{N}$ with $p \geq 4n\sqrt{m}$, a population size $\mu \geq (2n/m+1)^{m-1}$, and $\varepsilon_{nad}>2n/m$. 
%Then with probability $(1-e^{-\Omega(n)})$ the whole Pareto-optimal set of $m$\LOTZ is found in $6n^2$ generations and $6\mu n^2$ fitness evaluations.
For every initial population, a whole Pareto-optimal set of \mLOTZ is found in $6n^2$ generations and $6\mu n^2$ fitness evaluations with probability $1-e^{-\Omega(n)}$. The expected number of evaluations is at most $6\mu n^2 + o(1)$.
%\cuong{Instead of $6n^2$, we can write it more precisely as $2en^2+O(n)$.}\andre{Why is this more precise?}\cuong{ok, i might have misread this new version.}
\end{theorem}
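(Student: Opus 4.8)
The plan is to build everything on the preservation property of Lemma~\ref{lem:Reference-Points}. For $f=\mLOTZ$ the largest attainable value in any objective is the block length $f_{\max}=2n/m$, so the hypotheses $\varepsilon_{\text{nad}}\ge 2n/m=f_{\max}$, $p\ge 4n\sqrt{m}=2m^{3/2}f_{\max}$ and $\mu\ge(2n/m+1)^{m-1}\ge\lvert S\rvert$ (the last inequality by Lemma~\ref{lem:fitnessvectors-non-dom-LOTZ}) exactly match the requirements of Lemma~\ref{lem:Reference-Points}. Hence in every generation each fitness vector present in the first layer $F_t^1$ survives into $P_{t+1}$. Since every Pareto-optimal point is non-dominated and therefore always lies in $F_t^1$, once a Pareto-optimal fitness vector is generated it is never lost; progress toward covering the front is thus monotone and it suffices to bound the time until every Pareto point has been produced at least once. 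I would split this into reaching one Pareto point (Phase~1) and then filling the rest of the front (Phase~2), noting that for \mLOTZ each Pareto fitness vector corresponds to a unique string $1^{v_1}0^{2n/m-v_1}\cdots 1^{v_{m/2}}0^{2n/m-v_{m/2}}$, so the preserved individual is exactly the string we can mutate further.

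For Phase~1 I would track the potential $G_t:=\max_{x\in F_t^1}\sum_{k=1}^{m/2}\bigl(\LO_k(x)+\TZ_k(x)\bigr)$, where $\LO_k,\TZ_k$ denote the leading-ones and trailing-zeros counts of block~$k$; note $G_t\in[0,n]$, with $G_t=n$ exactly when some individual has every block sorted, i.e.\ is Pareto-optimal. The crucial observation is that $G_t$ is non-decreasing: if $x$ maximises $\sum_k(\LO_k+\TZ_k)$ then any individual dominating $x$ attains at least this value in every block, so a maximiser always lies in $F_t^1$ and is preserved by Lemma~\ref{lem:Reference-Points}. While $G_t<n$ some block is unsorted, so position $\LO_k+1$ of that block is a $0$; selecting a current maximiser as parent and flipping exactly that bit yields an offspring that dominates the parent and raises $G_t$ by at least one. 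This specific event has probability at least $\tfrac{1}{en}(1-\smallO(1))$ per generation (select one of the $\ge 1$ copies among the $\mu$ offspring, then perform the single correct flip), and at most $n$ such increases are needed.

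For Phase~2 I would use that the Pareto set is the grid $\{0,\dots,2n/m\}^{m/2}$, where two grid neighbours differing by $\pm1$ in one coordinate are a single bit flip apart. Once a Pareto point is covered it is preserved, so for any target $v$ there is a monotone grid path of length $\lVert v-v^{(0)}\rVert_1\le n$ from the first reached point $v^{(0)}$, each step being realised by selecting the (preserved) individual at the current point and performing one specific bit flip, again with probability at least $\tfrac{1}{en}(1-\smallO(1))$ per generation. Concatenating the two phases gives, for each fixed target $v$, a chain of at most $2n$ improving steps, each of probability at least $\tfrac{1}{en}(1-\smallO(1))$ in any generation in which it is the next pending step.

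Finally I would convert this into the stated bounds. Over $6n^2$ generations the number of completed steps stochastically dominates $\Bin\bigl(6n^2,\tfrac{1}{en}(1-\smallO(1))\bigr)$ with mean $\tfrac{6n}{e}(1-\smallO(1))>2n$ for large $n$, so a Chernoff bound shows that all $\le 2n$ steps of the chain for $v$ finish within $6n^2$ generations with probability $1-e^{-\Omega(n)}$; a union bound over the $(2n/m+1)^{m/2}=\bigO(n^{m/2})$ targets (polynomially many since $m=\bigO(1)$) keeps the total failure probability at $e^{-\Omega(n)}$. Each generation costs $\mu$ evaluations, giving $6\mu n^2$ evaluations with probability $1-e^{-\Omega(n)}$. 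For the expectation I would restart the analysis in blocks of $6n^2$ generations: since the high-probability bound holds from \emph{every} population and progress is monotone, the number of blocks needed is geometrically tailed, so the excess contribution is $e^{-\Omega(n)}\cdot\poly(n)=\smallO(1)$ evaluations, yielding at most $6\mu n^2+\smallO(1)$ in expectation. The main obstacle, and the point where the hypotheses are genuinely used, is keeping the preservation property of Lemma~\ref{lem:Reference-Points} valid at every generation so that neither the potential $G_t$ nor the covered part of the grid ever regresses; the second delicate point is matching the constant, which forces the per-generation improvement probability to be pinned at $\ge\tfrac{1}{en}(1-\smallO(1))$ and the chain length at $\le 2n$, so that $6n^2$ generations leave a constant-factor Chernoff margin.
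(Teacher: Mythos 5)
Your proposal is correct and follows essentially the same route as the paper: verify that Lemma~\ref{lem:Reference-Points} applies via Lemma~\ref{lem:fitnessvectors-non-dom-LOTZ} and $f_{\max}=2n/m$, track the non-decreasing potential $\sum_k(\LO_k+\TZ_k)$ to reach one Pareto point, then use the Hamming/grid distance to each target with single-bit-flip steps of per-generation probability $\approx 1/(en)$, a union bound over the $(2n/m+1)^{m/2}$ targets, and a restart argument for the expectation. The only (immaterial) differences are that you concatenate the two phases into one chain of $\le 2n$ steps and bound its duration via negative-binomial/Chernoff duality, whereas the paper allots $3n^2$ generations to each phase and applies Witt's concentration theorem for sums of independent geometric random variables.
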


\begin{proof}
    %Owing to $\varepsilon_{\text{nad}}>2n/m=f_{\max}$ we have that for every $j \in [m]$
    %\[
    %f_j^n(x) = \frac{f_j(x)-y_j^{\min}}{z_j^{\max}-y_j^{\min}}
    %\]
    %by Lemma~\ref{lem:Normalization} (since $y_j^{\text{nad}} = z_j^{\max}$) where $z_j^{\max}$ is the maximum value of the $j$-th objective in the current joint population and $y_j^{\min}$ is as in Line~\ref{line:min} in Algorithm~\ref{alg:nsga-iii} (i.e. the minimum value in the current joint population or from previous populations). Note that $y_j^{\text{nad}} - y_j^{\min} \leq 2n/m$ and since $f_j(x) \leq z_j^{\max}$ we see that $f_j^n(x) \leq 1$ for every $j \in [m]$. 
    By Lemma~\ref{lem:fitnessvectors-non-dom-LOTZ} %and Lemma~\ref{lem:population-size-larger-than-nondominated-points} 
    the condition on the population size $\mu$ in Lemma~\ref{lem:Reference-Points} is always met. Along with $f_{\max}=2n/m$ and $p \geq 4n\sqrt{m} = 2m^{3/2}f_{\max}$, Lemma~\ref{lem:Reference-Points} is applicable in every generation $t$.
    
    We use the method of typical runs and divide the optimization procedure into two phases. We  %\todo{It is necessary to have two phases since distance may decrease otherwise! And usually one optimized LOTZ in two phases!}
    show that with probability $(1-e^{-\Omega(n)})$ each phase is completed in $3n^2$ generations.
    
    \textbf{Phase $1$:} Create a Pareto-optimal search point.\\
    %Create a search point $x=(x^1, \ldots , x^m)$ \todo{Add suitable notation!} with $x^1, \ldots, x^k \in N:=\{1^j0^{m-j} \mid j\in \{0, \ldots, m\}\}$ (i.e. if Phase $m$ is finished, a Pareto-optimal search point is created).\\
    We upper bound the probability by $e^{-\Omega(n)}$ that a Pareto-optimal search point is not created in $3n^2$ generations. For $x \in P_t$ define the vector $w(x) \in \{0, \ldots , 2n/m\}^{m/2}$ by $w(x)_k:=f_{2k-1}(x)+f_{2k}(x)$. Note that $w(x)_k = 2n/m$ for every $k \in \{1, \ldots , n\}$ if and only if $x$ is Pareto-optimal. 
    Set $g_t:=\max_{x \in P_t}\sum_{i=1}^{m/2} w(x)_i \in [n] \cup \{0\}$. Then there is a Pareto-optimal solution if and only if $g_t=n$. Since a search point $x$ can only be dominated by a search point $y$ if $\sum_{k=1}^{m/2} w(x)_k<\sum_{i=1}^{m/2} w(y)_k$, $g_t$ cannot decrease by Lemma~\ref{lem:Reference-Points}. 
    For $k \in \{0\} \cup [n-1]$ we define the random variable $X_k$ as the number of generations $t$ with $g_t=k$. Then the number of generations until there is a Pareto-optimal solution is at most $\sum_{k=0}^{n-1} X_k$. To increase $g_t$, it suffices to choose an individual $y$ with $\sum_{k=1}^{m/2} w(y)_k=g_t$ and to create an offspring $z$ with $w(z)_r > w(y)_r$ for one $r \in [m/2]$ while $w(z)_q = w(y)_q$ for all $q \in [m/2] \setminus \{r\}$. The former happens with probability at least $1/\mu$ and for the latter we have to flip a specific bit of $y$ during mutation (in the $r$-th block of $y$) and keep the other bits unchanged, which happens with probability $1/n \cdot (1-1/n)^{n-1} \geq 1/(en)$. Let $s:=1/(en \mu)$. Hence, in one generation (i.e. during $\mu$ offspring productions), the probability of increasing $g_t$ is at least 
    \[
    1-(1-s)^{\mu} \geq \frac{s \mu}{s\mu + 1}:= \sigma
    \]
    due to Lemma~10 in~\cite{Badkobeh2015}. Hence, \neweditx{$X_k$ is stochastically dominated by a geometric random variable $Y_k$ with success probability $\sigma$} (i.e. for any $\lambda \geq 0$ we have $P(Y_k \leq \lambda) \leq P(X_k \leq \lambda)$). Note that $E[Y_k] = 1/\sigma = 1+en$. Define $Y:= \sum_{k=0}^{n-1} Y_k$. Then $E[Y]=n+en^2$ and since the $Y_k$ can be seen as independent %(since the $X_k$ are independent) 
    we have by a theorem of Witt %\dirk{Do we have space to state this in the paper?} 
    (see Theorem~1 in~\cite{WITT201438})
    \begin{align*}
    P(X \geq 3n^2) &\leq P(Y \geq 3n^2) = P(Y \geq E[Y]+\lambda) \\
    &\leq \exp(-\min(\lambda^2/r,\lambda \sigma)/4) = \exp(-\Omega(n)),
    \end{align*}
    where $\lambda:=(3-e)n^2-n \in \Theta(n^2)$ and $r:=\sum_{i=0}^{n-1}(1/\sigma)^2 = \sum_{i=1}^n(en+1)^2 \in \Theta(n^3)$. Hence, the probability of $g_t < n$ (i.e. $P_t$ contains no Pareto-optimal solution $x$) after $3n^2$ generations is at most $e^{-\Omega(n)}$.
    To bound the expectation, \neweditx{if} the goal is not reached after $3n^2$ generations, we repeat the above arguments with another period of $3n^2$ generations. The expected number of periods required is $1 + e^{-\Omega(n)}$.
    %Let
    %\[
    %G_k:=\{x \in \{0,1\}^n \mid x^i \in N \text{ for all $i \in \{1, \ldots , k-1\}$}\}.
    %\]
    %Note that $P_t \cap G_k \neq \emptyset$ by Corollary~\ref{cor:Reference-Points} (\todo{Formulate a lemma}), since if a point $y \in G_k$ is generated in a generation $\tilde{t}$, in a previous a search point $x$ with $x \in G_k$ is generated and then there is a point $y \in G_k \cap F_t^1$ (with maximum $(f_{2k-1}(y) + f_{2k}(y))$-value among all points in $G_k$)(\todo{Formulate a lemma}).    
    
    %Let $y \in G_k$ with $f_{2k-1}(y) + f_{2k}(y) = \ell$. During the selection step, the probability is at least $1/\mu$ to select $y$ as a parent. To increase $\ell$, we have to flip a specific bit of $y$ during mutation (to increase the value of $\ell$ by one) and remain the other bits unchanged which happens with probability $1/n \cdot (1-1/n)^{n-1} \geq 1/(en)$. Let $s:=1/(en \mu)$. Thus in one generation (i.e. during $\mu$ offspring productions), the probability to create a solution $z$ with $z \in G_k$ with $f_{2k-1}(z) + f_{2k}(z) > \ell$ is at most 
    %\[
    %1-(1-1/s)^{\mu} \geq \frac{s \mu}{s\mu + 1} 
    %\]
    %due to a inequality in \todo{add reference}. Then there is $\tilde{z} \in G_k \cap F_t^1$ with $f_{2k-1}(\tilde{z}) + f_{2k}(\tilde{z}) > \ell$ which passes the survival selection step and will be carried over into the next generation (\todo{Formulate a lemma})). Hence, the expected number of generations to finish this phase is at most $(n/m+1) \cdot (1+1/(s\mu)) = (n/m+1)(1+en) = O(n^2)$ since there are at most $n/m+1$ different values for $\ell$.

    \textbf{Phase $2$:} Cover the whole Pareto front.

    Note that, by Lemma~\ref{lem:Reference-Points}, Pareto-optimal fitness vectors can never disappear from the population.
    We upper bound the probability by $e^{-\Omega(n)}$ that not the whole Pareto front is covered after $3n^2$ generations. For a specific fitness vector $v \in \mathbb{N}_0^m$ with $v_{2i-1}+v_{2i} = 2n/m$ \newedit{for $i \in [m/2]$} we first upper bound the probability that a solution $x$ with $f(x)=(v_1, \ldots ,v_m)$ has not been created after $3n^2$ generations. Let $y \in \{0,1\}^n$ be with $f(y)=v$. We consider the distance of~$y$ to the closest Pareto-optimal search point in the population. For each generation $t$ let $d_t:=\min_{x \in \mathcal{F}_m \cap P_t}H(x,y)$ where $\mathcal{F}_m$ denotes the set of all Pareto-optimal search points (i.e. $\mathcal{F}_m \cap P_t$ is the set of all Pareto-optimal individuals in generation $t$) and $H(x,y)$ denotes the Hamming distance between $x$ and $y$. Since $\mathcal{F}_m \cap P_t \neq \emptyset$, we have $0 \leq d_t \leq n$ since the maximum possible Hamming distance between two Pareto-optimal search points is $n$. Note that we have created $y$ if $d_t=0$. %and $d_t$ is also the minimum number of bit flips required to create such a $y$ from an individual $x \in P_t \cap \mathcal{F}_m$.
    Since the population never loses all solutions with the same Pareto-optimal fitness vector (by Lemma~\ref{lem:Reference-Points}, 
    %and Lemma~\ref{lem:population-size-larger-than-nondominated-points}), 
    $d_t$ cannot increase. Further, for all $1 \leq k \leq n$, define the  random variable $X_k$ as the number of generations $t$ with $d_t=k$. Then the number of generations until there is a solution $y$ with $f(y)=v$ is at most $X=\sum_{k=1}^n X_k$. To decrease $d_t$, it suffices to choose an individual $x$ with $H(x,y)=d_t$ as a parent (which happens with probability at least $1/\mu$) and flip one specific bit in $x$ as follows. In a block $r$ with $f_{2r-1}(x) < f_{2r-1}(y)$ (and hence $f_{2r}(x)>f_{2r}(y)$) we may increase the number of leading ones and in a block $r$ with $f_{2r-1}(x)>f_{2r-1}(y)$ (and hence $f_{2r}(x) < f_{2r}(y)$) we may increase the number of trailing zeros. This generates a Pareto-optimal individual $z$ with $H(y,z)<H(x,y)$.
    Since a generation consists of $\mu$ trials, the probability for creating such a $y$ in one generation is at least 
    \[
    1-(1-s)^\mu \geq \frac{s\mu}{s\mu+1}:=p
    \]
    where $s$ is defined as in Phase~1. Hence, \neweditx{$X_k$ is stochastically dominated by a geometric random variable $Y_k$ with success probability $p$}, i.e. $E[Y_k] = 1/p = 1+en$. %Since the $X_k$ are independent, 
    Note that the $Y_k$ can be also seen as independent variables and as in the first phase we see that $P_t$ contains no solution $x$ with $f(x) = v$ after $3n^2$ generations with probability at most $e^{-\Omega(n)}$. %. Define $Y:= \sum_{k=1}^n Y_k$. Then $E[Y]=n+en^2$ and by a result of Witt (\todo{Add citation and the result in the preliminaries}) we have where $\lambda:=(3-e)n^2-n \in \Theta(n^2)$ and $s:=\sum_{i=1}^n(1/p)^2 = \sum_{i=1}^n(en+1)^2 \in \Theta(n^3)$ 
    %\begin{align*}
    %P(X \geq 3n^2) &\leq P(Y \geq 3n^2) = P(Y \geq %E[Y]+\lambda) \\
    %&\leq \exp(-\min(\lambda^2/s,\lambda p)/4) = \exp(-%\Omega(n)),
    %\end{align*} 
    %i.e. the probability that $P_t$ contains no solution $x$ with $f(x) = v$ after $3n^2$ generations is at most $e^{-\Omega(n)}$.
    By a union bound over all possible~$v$, the probability is at most $(2n/m+1)^{m/2} \cdot e^{-\Omega(n)} = e^{-\Omega(n)}$ that the Pareto front is not completely covered after $3n^2$ generations. 
    As in the first phase, the expected number of trials required until the Pareto front is completely covered after $3\mu n^2$ evaluations is also $1+e^{-\Omega(n)}$. The bounds on the number of function evaluations follow by multiplying with $\mu$.
    %Note that in $3n^2=O(n^2)$ generations happen $3n^2\mu = O(n^2 \mu)$ fitness evaluations.
    %Now $P_t$ contains Pareto-optimal individuals, while this phase is not finished, there is a Pareto-optimal $y \in P_t$, a $k \in \{1, \ldots , m\}$ and no Pareto-optimal individual $z$ with $(f_{2k-1}(z),f_{2k}(z)) = (f_{2k-1}(y)-a,f_{2k}(y)+a)$ for an $a \in \{-1,1\}$ and $(f_{2q-1}(z),f_{2q}(z)) = (f_{2q-1}(y),f_{2q}(y))$ for every $q \in \{1, \ldots , m\} \setminus \{k\}$).
    %As argued in Phase $1$, the probability that during one offspring production, the sequence of operations selection and mutation produces such an offspring $z$ (by selecting $y$ as a parent and performing a specific one-bit flip) is at least $s=1/(e\mu n)$ and again with $\mu$ trials, the probability to generate such a $z$ in one generation is
    %\[
    %1-(1-s)^{\mu} \geq \frac{s \mu}{s \mu + 1}.
    %\]
    %Since Pareto-optimal solutions are in $F_t^1$, they are never be removed entirely from the population (see Corollary~\ref{cor:Reference-Points}) and there are at most $(n/m+1)^m-1$ Pareto-optimal fitness vectors to cover, the expected number of generations to complete this phase is at most $O(en \cdot ((n/m+1)^m-1)) = O(n^{m+1})$. 
    %We obtain that both phases above are finished after $O(n^2) + O(n^{m+1}) = O(n^{m+1})$ generations. Since there are $2\mu$ evaluations in every generation, the bound for the expected number of evaluations is $O(\mu n^{m+1})$.
    \end{proof}

     A %direct consequence 
     \newedit{consequence} of Theorem~\ref{thm:Runtime-Analysis-NSGA-III-mLOTZ} is that \nsgaIII with a population size of $\mu=n+1$ can optimize the bi-objective $2$-\LOTZ function in $3n^2$ generations with overwhelming probability. The required population size is $n+1$ and coincides with the size of the Pareto front. In contrast, there are only positive results for \nsga in case that the population size is at least $4n+4$ (see~\cite{ZhengLuiDoerrAAAI22}). The reason is that \nsga uses a different mechanism for survival selection, the so called \emph{crowding distance}. We refer to~\cite{ZhengLuiDoerrAAAI22,ZhengD2023} for details.
     
\section{\mOMM}
%Now we extend 
\neweditx{We now extend} the result of Wietheger and Doerr from~\cite{WiethegerD23} on the expected 
%optimization time 
\neweditx{runtime}
of \nsgaIII on $3$-\OMM to the \mOMM benchmark for arbitrary even values of~$m$. 
%Here 
We \neweditx{also} divide $x$ in $m/2$ 
%many 
blocks, but 
%in each block 
\neweditx{we count both} the number\neweditx{s} of ones and zeros \neweditx{in each block}.

\begin{definition}[\citet{Zheng2023Inefficiency}]
Let $m$ be divisible by $2$ and let the problem size be a multiple of $m/2$. Then the $m$-objective function \mOMM is defined by
$\mOMM: \{0,1\}^n \to \mathbb{N}_0^m$ as 
\[
\mOMM(x) = (f_1(x), \ldots ,f_m(x))
\]
with 
\[
f_k(x)=
\begin{cases}
    \sum_{i=1}^{2n/m} x_{i+n(k-1)/m}, & \text{ if $k$ is odd,} \\
    \sum_{i=1}^{2n/m} (1-x_{i+n(k-2)/m}), & \text{ else,}
\end{cases}
\]
for all $x=(x_1, \ldots ,x_n) \in \{0,1\}^n$.
\end{definition}
%When $m=2$, the function is referred as \emph{bi-objective}.
Note that every search point is Pareto-optimal in the case of $\mOMM$. Since for each block there are only $2n/m+1$ distinct fitness values, a Pareto-optimal set has cardinality $(2n/m+1)^{m/2}$.

In the following, we say that an event occurs \emph{with high probability} if it occurs with probability at least $1-O(n^\varepsilon)$ for some constant~$\varepsilon > 0$.
\begin{theorem}
\label{thm:Runtime-Analysis-NSGA-III-mOMM}
Consider \nsgaIII with uniform selection
optimizing $f:=\mOMM$ for any \neweditx{constant} $m \in \mathbb{N}$ divisible by $2$ 
with \neweditx{$\varepsilon_{\text{nad}} \geq 2n/m$ and} a set $\refer$ of reference points as defined above for $p \in \mathbb{N}$ with $p \geq 4n\sqrt{m}$ and a population size $\mu \geq (2n/m+1)^{m/2}$. 
%Then for any constant $c > 2m+4$ 
For every initial population, a Pareto-optimal set of $\mOMM$ is found in $O(n\ln n)$ generations and $O(\mu n \ln n)$ fitness evaluations with high probability and in expectation. 
% at least $1-n^{m/2+1-c/4}-n^{1-c/4}$\dirk{We could define ``with high probability'' as $1-\Omega(n^\varepsilon)$ for some constant~$\varepsilon > 0$. That would clean up the statement. TO DO Dirk}. 
%Hence, the expected number of evaluations is at most $\mu c n \ln(n) (1+o(1))$.
%$(2m+5)\mu n\ln(n)(1 + o(1))$. \andre{Dirk, could you check that?}
\end{theorem}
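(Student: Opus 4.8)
The plan is to mirror the structure of the $\mLOTZ$ analysis (Theorem~\ref{thm:Runtime-Analysis-NSGA-III-mLOTZ}) but to exploit that every search point of $\mOMM$ is already Pareto-optimal, so no analogue of Phase~1 is needed. Since $f_{\max}=2n/m$, the hypotheses $p\geq 4n\sqrt m = 2m^{3/2}f_{\max}$, $\varepsilon_{\text{nad}}\geq f_{\max}$ and $\mu\geq(2n/m+1)^{m/2}=\lvert S\rvert$ are exactly what Lemma~\ref{lem:Reference-Points} requires. As the whole joint population lies in $F_t^1$, Lemma~\ref{lem:Reference-Points} applies in every generation and guarantees that the set $C_t:=f(P_t)$ of covered fitness vectors is non-decreasing in~$t$. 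It therefore suffices to bound the time until $C_t$ equals the whole front, a grid of $(2n/m+1)^{m/2}$ points indexed by the number of ones in each of the $m/2$ blocks.

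For a fixed target fitness vector~$v$ I would use the grid-distance potential $D_t:=\min_{u\in C_t}\lVert u-v\rVert_1$, where a fitness vector is identified with its $m/2$ blockwise ones-counts. First I would observe that if $u^\ast$ attains the minimum and $u^\ast\neq v$, then moving $u^\ast$ one unit towards $v$ in any block in which they differ yields a vector $u'$ with $\lVert u'-v\rVert_1 = D_t-1$; by minimality $u'\notin C_t$, so $u'$ is a genuinely new fitness value. Hence an offspring realising $u'$ is first-ranked and, by Lemma~\ref{lem:Reference-Points}, its fitness survives into $P_{t+1}$, giving $D_{t+1}\leq D_t-1$. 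Crucially, the number of bits whose single flip produces such progress is, across all blocks, at least $\sum_k\lvert u^\ast_k-v_k\rvert = D_t$ (a block needing more ones has at least that many zeros to flip, and symmetrically for fewer ones). Selecting a parent of fitness $u^\ast$ (probability $\geq1/\mu$) and flipping exactly one useful bit thus succeeds per offspring with probability at least $D_t/(en\mu)$, so over the $\mu$ offspring of one generation the probability of progress is at least $1-(1-D_t/(en\mu))^{\mu}\geq D_t/(2en)$.

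This yields the multiplicative drift $\E{D_{t+1}\mid D_t}\leq (1-\tfrac{1}{2en})D_t$ with $D_0\leq n$. The multiplicative drift theorem then gives $\E{T_v}=O(n\log n)$ for the time $T_v$ to cover~$v$, and its tail version gives $P(T_v > c\,n\log n)\leq n^{-(m/2+2)}$ for a suitable constant~$c$. A union bound over the $(2n/m+1)^{m/2}=O(n^{m/2})$ targets shows that the whole front is covered within $O(n\log n)$ generations with probability $1-O(n^{-2})$, and integrating the same tail bound (or a restart argument) turns this into an $O(n\log n)$ bound in expectation; multiplying by~$\mu$ gives the stated evaluation counts.

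The routine parts are the persistence of covered vectors (immediate from Lemma~\ref{lem:Reference-Points}) and the union bound. The main obstacle is obtaining \emph{multiplicative} rather than additive drift: this is exactly what separates the $O(n\log n)$ bound here from the $O(n^2)$ bound for $\mLOTZ$, and it hinges on the counting observation that the number of improving single-bit flips grows proportionally to the remaining grid-distance~$D_t$. One must also take care that moving the nearest covered vector towards the target always lands on an \emph{uncovered} fitness value, so that Lemma~\ref{lem:Reference-Points} can be invoked directly and no subtle tie-breaking inside \textsc{Select} needs to be analysed.
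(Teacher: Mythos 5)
Your proposal is correct, and it reaches the same $O(n\ln n)$ bound by a genuinely different route. The paper fixes a single preimage $y$ with $f(y)=v$, tracks the Hamming distance $d_t=\min_{x\in P_t}H(x,y)$, decomposes the waiting time into levels $k=d_t$, dominates the time at level $k$ by an independent geometric random variable with success probability $\Theta(k/n)$ (the same counting fact you use: $k$ improving one-bit flips), and then applies the tail bound for sums of independent geometric random variables (Theorem~16 of \cite{DOERR2019115}) before the union bound and restart argument. You instead work entirely in fitness space with the potential $D_t=\min_{u\in f(P_t)}\lVert u-v\rVert_1$ and convert the same counting observation into multiplicative drift with $\delta=1/(2en)$, finishing with the multiplicative drift theorem and its tail version. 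The two arguments are of comparable length and both hinge on Lemma~\ref{lem:Reference-Points} for monotonicity; your choice of potential has a small technical advantage worth noting: Lemma~\ref{lem:Reference-Points} only guarantees that a \emph{fitness vector} of $F_t^1$ survives, not that the particular minimizing search point (or one at the same Hamming distance to $y$) does, so monotonicity of your $D_t$ is immediate from the lemma, whereas monotonicity of the paper's $d_t$ requires an extra observation about which individual the selection retains. Your bookkeeping is sound throughout (the lower bound $\sum_k\lvert u^*_k-v_k\rvert$ on the number of useful flips, the amplification $1-(1-s)^\mu\ge s\mu/(s\mu+1)\ge D_t/(2en)$ using $D_t\le n$, and the union bound over $(2n/m+1)^{m/2}=O(n^{m/2})$ targets with per-target failure probability $n^{-(m/2+2)}$), so I see no gap.
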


\begin{proof}
    We upper bound the probability that after $cn\ln(n)$ generations, for some constant $c > 4m+4$, not the whole Pareto front is covered. 
    For this we first upper bound the probability that a solution $y$ with $f(y)=(v_1, \ldots ,v_m)$ where $v$ is a specific fitness vector with $v_{2i-1}+v_{2i} = 2n/m$ for $i \in [m]$ has not been generated after $cn\ln(n)$ generations. For each generation $t$ let $d_t:=\min_{x \in P_t}H(x,y)$. Note that $0 \leq d_t \leq n$ and that we have created an individual $y$ with $f(y)=v$ if $d_t=0$. %where the intermediate solutions are all Pareto-optimal.
    Since the population never loses all solutions with the same Pareto-optimal fitness vector (by Lemma~\ref{lem:Reference-Points}), $d_t$ cannot increase. As in the proof of Theorem~\ref{thm:Runtime-Analysis-NSGA-III-mLOTZ}, for all $1 \leq k \leq n$, define the random variable $X_k$ as the number of generations $t$ with $d_t=k$. Then the number of generations until there is a solution $y$ with $f(y)=v$ is at most $X=\sum_{k=1}^n X_k$. To decrease $k=d_t$, it suffices to choose an individual $x$ with $H(x,y)=k$ as a parent (which happens with probability at least $1/\mu$) and flip one of $k$ specific bits, while keeping the other bits unchanged. This happens with probability at least $k/(en)$.
    Since a generation consists of $\mu$ trials, the probability for decreasing $d_t$ in one generation is at least 
    \[
    1-(1-s_k)^\mu \geq \frac{s_k\mu}{s_k\mu+1}:=p_k
    \]
    where $s_k:=k/(e \mu n)$. Note that $p_k = k/(k+en) \geq k/(4n)$ and that \neweditx{$X_k$ is stochastically dominated by a geometric random variable $Y_k$ with success probability $p_k$} where the $Y_k$ are also independent. Let $Y:= \sum_{k=1}^n Y_k$. 
    %Then $E[Y]=\sum_{k=1}^n 1+en/k$ and thus $E[Y] \leq n+en(1+\ln(n))$. 
    By Theorem~16 in \cite{DOERR2019115} we have 
    \begin{align*}
    P(X \geq cn\ln(n)) &\leq P(Y \geq cn\ln(n)) \\
    &= P(Y \geq 4(1+\delta)n\ln(n)) \leq n^{-\delta}
    \end{align*}
    for $\delta:=c/4-1$, i.e. the probability that $R_t$ contains no solution $x$ with $f(x) = v$ after $cn\ln(n)$ generations is at most $n^{-\delta}$. By a union bound over all possible $v$, the probability that there is a fitness vector $v$ such that $P_t$ does not contain a Pareto-optimal solution $x$ with $f(x)=v$ after $cn\ln(n)$ generations 
    is at most $(2n/m+1)^{m/2} \cdot n^{-\delta}$. Using $m \ge 2$ and $\delta = c/4-1 > m$, this is at most $(n+1)^{m/2} \cdot n^{-m} = O(n^{-m/2})$. 
    % As $\delta = c/4-1 > 2m$ For $m=2$ this is $O(n^{-\delta+1}) = O($
    % \dirk{the first inequality holds without the $+n^{-\delta}$ term if $2n/m + 1 \le n$, which is true if and only if $m \ge 4$. For $m=2$ it holds as well as then the exponent is $1$. Maybe add an explanation about this?  TO DO Dirk}$(2n/m+1)^{m/2} \cdot n^{-\delta} \leq n^{m/2} \cdot n^{-\delta} +n^{-\delta}\leq n^{-\delta + m/2}+n^{-\delta} = n^{m/2+1-c/4} + n^{1-c/4}$ 
    Note that in $cn\ln(n)$ generations the algorithm makes $cn \ln(n)\mu$ fitness evaluations. The bound on the expected number of generations follows by repeating the above arguments with another period of $cn \ln(n)$ generations if necessary and noting that in expectation, $1+o(1)$ periods are sufficient. Multiplying by $\mu$ yields an upper bound on the expected number of evaluations.
    %for $c=2m+5$ since $n^{m/2+1-c/4} + n^{1-c/4} = o(1)$ for $c = 2m-5$.\dirk{Why do we need a larger $c$ here?}
    \end{proof}

Hence, \nsgaIII with a population size of $n+1$ (coinciding with the size of the Pareto front) is also able to optimize the 2-\OMM benchmark in polynomial time \neweditx{in expectation}. This cannot \neweditx{be} achieved with the \nsga: In~\citet{ZhengLuiDoerrAAAI22} \neweditx{it} is shown that the expected number of generations is $\exp(\Omega(n))$ until the population covers the whole Pareto front. 

\section{$m$-COCZ}

In \mCOCZ we also divide $x$ into two halves and the second half is further divided into $m/2$ many blocks of equal size $n/m$. In each block we maximize both the number of ones and the number of zeros, i.e. the objectives are \emph{conflicting} there. In the first part we only maximize the number of ones which goes equally into each objective. \newedit{We also see that a smaller population size is needed than in the $\mLOTZ$ or $\mOMM$-case.}
\begin{definition}[\citet{Laumanns2004}]
Let $m$ be divisible by $2$ and let the problem size be a multiple of $m$. Then the $m$-objective function \mCOCZ is defined by
$\mCOCZ: \{0,1\}^n \to \mathbb{N}_0^m$ as 
\[
\mCOCZ(x) = (f_1(x), \ldots ,f_m(x))
\]
with 
\[
f_k(x)= \sum_{i=1}^{n/2}x_i+
\begin{cases}
    \sum_{i=1}^{n/m} x_{i+n/2+(k-1)n/(2m)}, & \text{ if $k$ is odd,} \\
    \sum_{i=1}^{n/m} \neweditx{\left(1-x_{i+n/2+(k-2)n/(2m)}\right)}, & \text{ \neweditx{otherwise},}
\end{cases}
\]
for all $x=(x_1, \ldots ,x_n) \in \{0,1\}^n$.
\end{definition}

Note that a search point $x$ is Pareto-optimal with respect to \mCOCZ if and only if $x$ has $n/2$ many ones in the first half. Since for each block there are only $n/m+1$ distinct fitness values, a Pareto-optimal set has cardinality $(n/m+1)^{m/2}$. Hence, not every search point is Pareto-optimal with respect to \mCOCZ. However, in contrast to $\mLOTZ$ we can make a precise statement about the cardinality of a maximum set of mutually incomparable solutions: It coincides with the size of the Pareto front.
%can be achieved for \OMM with the exact same settings.
%for the algorithm and
%the noise model. However,
%Due to the space limit, we do not cover this result here.
%go into details.
%omit the proof of this result.

\begin{lemma}
\label{lem:Non-Dominated-Solutions-mCOTZ}
Let $\mathcal{F}$ be a Pareto \neweditx{optimal} set of $f:=\mCOCZ$ \neweditx{consisting of mutually incomparable solutions}. Then $\mathcal{F}$ is a set of mutually incomparable solutions with maximum cardinality for $f:=\mCOCZ$ with $\lvert{\mathcal{F}}\rvert=(n/m+1)^{m/2}$. 
\end{lemma}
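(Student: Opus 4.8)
The plan is to parametrize each search point by two quantities that decouple the ``shared'' part of the objectives from the per-block part. First I would set, for $x\in\{0,1\}^n$, the value $a:=\sum_{i=1}^{n/2}x_i\in\{0,\dots,n/2\}$ for the number of ones in the first (OneMax) half, and for each $j\in[m/2]$ the value $b_j\in\{0,\dots,n/m\}$ for the number of ones in the $j$-th block of the second half. By the definition of $\mCOCZ$ the first half contributes exactly $a$ to \emph{every} objective, so that $f_{2j-1}(x)=a+b_j$ and $f_{2j}(x)=a+(n/m-b_j)$ for every $j$. In particular the fitness vector of $x$ is in bijection with the tuple $(a,b_1,\dots,b_{m/2})$, since $a$ is recovered as $(f_{2j-1}+f_{2j}-n/m)/2$ and then each $b_j=f_{2j-1}-a$.

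The heart of the argument is to show that on any set $S$ of mutually incomparable solutions the projection $x\mapsto(b_1(x),\dots,b_{m/2}(x))$ is injective. Suppose $x,y\in S$ have the same block-count tuple $\vec b(x)=\vec b(y)$. If additionally $a(x)=a(y)$, then $f(x)=f(y)$, so $x$ and $y$ weakly dominate each other and are comparable, contradicting the definition of $S$. Otherwise, assuming w.l.o.g.\ $a(x)>a(y)$, every objective of $x$ exceeds that of $y$ because the block-dependent term is identical: $f_{2j-1}(x)=a(x)+b_j>a(y)+b_j=f_{2j-1}(y)$ and likewise $f_{2j}(x)>f_{2j}(y)$. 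Hence $x\succ y$, again contradicting incomparability. Therefore distinct elements of $S$ carry distinct tuples in $\{0,\dots,n/m\}^{m/2}$, which gives $\lvert S\rvert\le(n/m+1)^{m/2}$.

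It then remains to match this upper bound with the size of a Pareto set. A solution is Pareto-optimal precisely when $a=n/2$ (otherwise flipping a zero to a one in the first half strictly increases all objectives), and for $a=n/2$ the block counts $\vec b$ may be arbitrary, so $\mathcal{F}$ covers exactly the $(n/m+1)^{m/2}$ distinct non-dominated fitness vectors and hence $\lvert\mathcal{F}\rvert=(n/m+1)^{m/2}$, as already noted before the lemma. Since $\mathcal{F}$ is itself a set of mutually incomparable solutions attaining the upper bound, no mutually incomparable set can be larger, so $\mathcal{F}$ has maximum cardinality, completing the proof.

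I do not expect a genuine obstacle here; the only subtle point is the bookkeeping that the first half feeds the same value $a$ into all $m$ objectives, which is exactly what forces two solutions with equal second-half block counts to be comparable. This is precisely the structural feature that fails for $\mLOTZ$ (where Lemma~\ref{lem:fitnessvectors-non-dom-LOTZ} only yields $(2n/m+1)^{m-1}$, strictly larger than the Pareto front), and contrasting the two cases is worth a sentence to highlight why equality holds for $\mCOCZ$.
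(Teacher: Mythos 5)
Your proposal is correct and follows essentially the same route as the paper: the paper partitions the set of distinct fitness vectors into $(n/m+1)^{m/2}$ chains indexed by the block-count tuple $w$ (each chain being $\{w+\ell\cdot\vec{1}\}$, reflecting the shared first-half contribution), and notes that a mutually incomparable set meets each chain at most once, which is exactly the contrapositive of your injectivity claim for the projection $x\mapsto(b_1,\dots,b_{m/2})$. The concluding step — that the Pareto front itself attains the bound $(n/m+1)^{m/2}$ — also matches the paper.
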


\begin{proof}
    Let $S$ be a set of mutually incomparable solutions. We show that $\lvert{S}\rvert \leq (n/m+1)^{m/2}$. For a maximum set $M$ of solutions with distinct fitness vectors (i.e. $x,y \in M$ may be comparable, but $f(x) \neq f(y)$ if $x \neq y$) we have that $\lvert{M}\rvert = \lvert\{f(x) \mid x \in M\}\rvert = (n/2+1)(n/m+1)^{m/2}$ since there are $(n/2+1)(n/m+1)^{m/2}$ distinct fitness vectors: For a fixed number of ones in the first half of $x$ there are $(n/m+1)^{m/2}$ distinct solutions \neweditx{(which have all the same $f_1(x) + \ldots + f_m(x)$)} and two search points $x_1,x_2$ with a distinct number of ones in the first half have also distinct fitness vectors since $f_1(x_1) + \ldots + f_m(x_1) \neq f_1(x_2) + \ldots f_m(x_2)$.
    Now we partition $M$ into $(n/m+1)^{m/2}$ many sets (of equal cardinality) such that every two search points $x,y$ in the same set are comparable. Let
    \[
    W:=\{w \in \{0, \ldots ,n/m\}^m \mid w_{2i-1}+w_{2i}=n/m\}
    \]
    and for $w \in W$ let $M_w=\{x \in M \mid f(x) = w+\ell \cdot \vecone \text{ for } \ell \in \{0, \ldots , n/2\}\}$ where $\vecone:=(1, \ldots , 1) \in \mathbb{N}^m$, reflecting the possibility of having $\ell$ ones in the first half of the bit string that contribute to all objectives. Then $M= \bigcup_{w \in W} M_w$ and two solutions $x,y \in M_w$ are comparable: Let $x=w+\ell_1 \cdot \Vec{1}$ and $y=w+\ell_2 \cdot \Vec{1}$. \neweditx{Then $x \succeq y$ if $\ell_1 \leq \ell_2$ and $y \succeq x$ if $\ell_2 \leq \ell_1$}.  
    
    Hence, for every $w \in W$ there is at most one $x \in S$ with $x \in M_w$, which implies $\lvert{S}\rvert \leq \lvert{W}\rvert = (n/m+1)^{m/2}=\vert{\mathcal{F}}\vert$.
\end{proof}

Since $f_\max:=n/2+n/m$ for $f:=\mCOCZ$ we obtain the following result for the runtime of NSGA-III on $\mCOCZ$.

\begin{theoremrep}
\label{thm:Runtime-Analysis-NSGA-III-mOMM}
Consider \nsgaIII with uniform selection
optimizing $f:=\mCOCZ$ for any \neweditx{constant} $m \in \mathbb{N}$ with \neweditx{$\varepsilon_{\text{nad}} \geq n/2+n/m$ and} a set $\refer$ of reference points as defined above for $p \in \mathbb{N}$ with $p \geq 2m^{3/2}(n/2+n/m)$
%\cuong{Is it better to write this as $n(m+2)\sqrt{m}$?}, 
\neweditx{and} a population size $\mu \geq (n/m+1)^{m/2}$. 
%Let $c >2m+4$ be a constant. 
\newedit{For every initial population,} a Pareto-optimal set of $\mCOCZ$ is found in $O(n\ln n)$ generations and $O(\mu n \ln n)$ fitness evaluations with high probability and in expectation.
%at least $(1-(n/2)^{-c/4+1+m/2} - (n/2)^{-c/4+1})(1-(n/2)^{-c/4+1}) = (1-n^{-\Omega(c)})$. 
%Hence, the expected number of evaluations is at most $O(\mu n \ln(n/2))$.
%$(2m+5)n\ln(n/2)(1+o(1))$.
%\dirk{Same comments as for \mOMM.}
\end{theoremrep}

\ifarxiv\else The proof of this theorem is omitted \ifreview\footnote{Reviewers can find this proof in the supplementary material.}\fi since it is very similar to the proofs of Theorem~\ref{thm:Runtime-Analysis-NSGA-III-mLOTZ} and Theorem~\ref{thm:Runtime-Analysis-NSGA-III-mOMM}. \fi
%\andre{We could add an arxiv version and cite that.}

\begin{appendixproof}
    By Lemma~\ref{lem:fitnessvectors-non-dom-LOTZ} %and Lemma~\ref{lem:population-size-larger-than-nondominated-points} 
    the condition on the population size $\mu$ in Lemma~\ref{lem:Reference-Points} is always met. Along with $f_{\max}=n/2+n/m$ and $p \geq 2m^{3/2}(n/2+n/m) = 2m^{3/2}f_{\max}$, Lemma~\ref{lem:Reference-Points} is applicable in every generation $t$.
    
    We use the method of typical runs and divide the optimization procedure into two phases. We  %\todo{It is necessary to have two phases since distance may decrease otherwise! And usually one optimized LOTZ in two phases!}
    show that with probability $1-o(1)$ each phase is completed in $cn/2\ln(n/2)$ generations, where \neweditx{$c >4m+8$} is a constant.

    \textbf{Phase $1$:} Create a Pareto-optimal search point.\\
    %Create a search point $x=(x^1, \ldots , x^m)$ \todo{Add suitable notation!} with $x^1, \ldots, x^k \in N:=\{1^j0^{m-j} \mid j\in \{0, \ldots, m\}\}$ (i.e. if Phase $m$ is finished, a Pareto-optimal search point is created).\\
    We upper bound the probability by $(n/2)^{-c/4-1}$ that a Pareto-optimal search point is not created in $cn/2\ln(n/2)$ generations. 
    Set $g_t:=\min_{x \in P_t}\sum_{i=1}^{n/2} (1-x_i) \in \{0\} \cup [n/2]$ (i.e. $g_t$ is the minimum number of zeros in the first half of an individual in the population). Then the population contains a Pareto-optimal solution if and only if $g_t=0$. Since a search point $x$ can only be dominated by a search point $y$ if $\sum_{i=1}^{n/2} x_i < \sum_{i=1}^{n/2} y_i$ (i.e. $\sum_{i=1}^{n/2} (1-y_i) < \sum_{i=1}^{n/2} (1-x_i)$), $g_t$ cannot increase by Lemma~\ref{lem:Reference-Points}.
    For $k \in [n/2]$ we define the random variable $X_k$ as the number of generations $t$ with $g_t=k$. Then the number of generations until there is a Pareto-optimal solution is \newedit{at most} $\sum_{k=1}^{n/2} X_k$. To decrease $g_t$, it suffices to choose an individual $y$ with $\sum_{i=1}^{n/2} (1-y_i)=k$ and to create an offspring $z$ with $\sum_{i=1}^{n/2} (1-z_i) < k$ while the second half of $z$ agrees with that of $y$. The former happens with probability at least $1/\mu$ and for the latter we may flip a zero in the first half of $y$ to one during mutation and keep the other bits unchanged, which happens with probability $k/n \cdot (1-1/n)^{n-1} \geq k/(en)$. Let $s_k:=k/(en \mu)$. Hence, in one generation (i.e. during $\mu$ offspring productions), the probability of decreasing $k$ is at least 
    \neweditx{
    \[
    1-(1-s_k)^{\mu} \geq \frac{s_k \mu}{s_k\mu + 1}:= \sigma_k.
    \]}
    Note that $\sigma_k = k/(k+en) \geq k/(4n) \neweditx{= (1/8)k/(n/2)}$ and hence, \neweditx{$X_k$ is stochastically dominated by a geometric random variable $Y_k$ with success probability~$\sigma_k$}. Let $Y:= \sum_{k=1}^{n/2} Y_k$. By Theorem~16 in \cite{DOERR2019115} 
    \neweditx{\begin{align*}
    P(X \geq cn/2\ln(n/2)) &\leq P(Y \geq cn/2\ln(n/2)) \\
    &= P(Y \geq 8(1+\delta)n/2 \ln(n/2)) \leq (n/2)^{-\delta}
    \end{align*}}
    for \neweditx{$\delta:=c/8-1$}, i.e. the probability that $P_t$ contains no solution $z$ with $z_i = 1$ for $i \in [n/2]$ is at most $(n/2)^{-\delta}$.
    To bound the expectation, we argue that in case the goal is not reached after $cn/2\ln(n/2)$ generations, we repeat the above arguments with another period of $cn/2\ln(n/2)$ generations. The expected number of periods required is at most $1 + o(1)$.
    %Let
    %\[
    %G_k:=\{x \in \{0,1\}^n \mid x^i \in N \text{ for all $i \in \{1, \ldots , k-1\}$}\}.
    %\]
    %Note that $P_t \cap G_k \neq \emptyset$ by Corollary~\ref{cor:Reference-Points} (\todo{Formulate a lemma}), since if a point $y \in G_k$ is generated in a generation $\tilde{t}$, in a previous a search point $x$ with $x \in G_k$ is generated and then there is a point $y \in G_k \cap F_t^1$ (with maximum $(f_{2k-1}(y) + f_{2k}(y))$-value among all points in $G_k$)(\todo{Formulate a lemma}).    
    
    %Let $y \in G_k$ with $f_{2k-1}(y) + f_{2k}(y) = \ell$. During the selection step, the probability is at least $1/\mu$ to select $y$ as a parent. To increase $\ell$, we have to flip a specific bit of $y$ during mutation (to increase the value of $\ell$ by one) and remain the other bits unchanged which happens with probability $1/n \cdot (1-1/n)^{n-1} \geq 1/(en)$. Let $s:=1/(en \mu)$. Thus in one generation (i.e. during $\mu$ offspring productions), the probability to create a solution $z$ with $z \in G_k$ with $f_{2k-1}(z) + f_{2k}(z) > \ell$ is at most 
    %\[
    %1-(1-1/s)^{\mu} \geq \frac{s \mu}{s\mu + 1} 
    %\]
    %due to a inequality in \todo{add reference}. Then there is $\tilde{z} \in G_k \cap F_t^1$ with $f_{2k-1}(\tilde{z}) + f_{2k}(\tilde{z}) > \ell$ which passes the survival selection step and will be carried over into the next generation (\todo{Formulate a lemma})). Hence, the expected number of generations to finish this phase is at most $(n/m+1) \cdot (1+1/(s\mu)) = (n/m+1)(1+en) = O(n^2)$ since there are at most $n/m+1$ different values for $\ell$.

    \textbf{Phase $2$:} Cover the whole Pareto front.

    Note that, by Lemma~\ref{lem:Reference-Points}, Pareto-optimal fitness vectors can never disappear from the population.
    We upper bound the probability by \neweditx{$(n/2)^{-c/8+1+m/2} + (n/2)^{-c/8+1}$} that not the whole Pareto front is covered after $cn/2\ln(n/2)$ generations. For a specific fitness vector $v \in \mathbb{N}_0^m$ with $v_{2i-1}+v_{2i} = n/2+n/m$ we first upper bound the probability that a solution $\newedit{y}$ with \neweditx{$f(y)=v$} has not been created after $cn/2\ln(n/2)$ generations. Let $y \in \{0,1\}^n$ be with $f(y)=v$. We consider the distance of~$y$ to the closest Pareto-optimal search point in the population. For each generation $t$ let $d_t:=\min_{x \in \mathcal{F}_m \cap P_t}H(x,y)$ where $\mathcal{F}_m$ denotes the set of all Pareto-optimal search points (i.e. $\mathcal{F}_m \cap P_t$ is the set of all Pareto-optimal individuals in generation $t$) and $H(x,y)$ denotes the Hamming distance between $x$ and $y$. Since $\mathcal{F}_m \cap P_t \neq \emptyset$, we have $0 \leq d_t \leq n/2$ since the maximum possible Hamming distance between two Pareto-optimal search points is $n/2$. Note that we have created $y$ if $d_t=0$. %and $d_t$ is also the minimum number of bit flips required to create such a $y$ from an individual $x \in P_t \cap \mathcal{F}_m$.
    Since the population never loses all solutions with the same Pareto-optimal fitness vector (by Lemma~\ref{lem:Reference-Points}), $d_t$ cannot increase. Further, for all $1 \leq k \leq n/2$, define the random variable $X_k$ as the number of generations $t$ with $d_t=k$. Then the number of generations until there is a solution $y$ with $f(y)=v$ is at most $X=\sum_{k=1}^{n/2} X_k$. To decrease $d_t$, it suffices to choose an individual $x$ with ${H(x,y)=d_t=k}$ as a parent (which happens with probability at least $1/\mu$) and flip one of $k$ specific bits while keeping the remaining bits unchanged, which happens with probability at least $k/(en)$.
    Since a generation consists of $\mu$ trials, the probability for creating such a $y$ in one generation is at least 
    \[
    1-(1-s_k)^\mu \geq \frac{s_k\mu}{s_k\mu+1}=\sigma_k
    \]
    where $s_k:=k/(e\mu n)$. Hence, \neweditx{$X_k$ is stochastically dominated by a geometric random variable $Y_k$ with success probability $\sigma_k$}. \neweditx{As above we have $\sigma_k \geq (1/8)k/(n/2)$}. Let $Y:=\sum_{k=1}^{n/2} Y_k$. Again by Theorem~16 in \cite{DOERR2019115}
    \neweditx{
    \begin{align*}
    P(X \geq cn/2\ln(n/2)) &\leq P(Y \geq cn/2\ln(n/2)) \\
    &= P(Y \geq 8(1+\delta)n/2 \ln(n/2)) \leq (n/2)^{-\delta}
    \end{align*}}
    for \neweditx{$\delta:=c/8-1$}, i.e. the probability that $P_t$ contains no solution $y$ with $f(y)=v$ is at most $(n/2)^{-\delta}$. By a union bound over all possible $v$, the probability is at most 
    \neweditx{
    \begin{align*}
    (n/m+1)^{m/2} \cdot (n/2)^{-\delta} &\leq (n/2)^{-\delta + m/2} + (n/2)^{-\delta} \\
    &= (n/2)^{-c/8+1+m/2} + (n/2)^{-c/8+1} = o(1)
    \end{align*}}
    that the Pareto front is not completely covered after $cn/2\ln(n/2)$ generations \neweditx{since $c > 4m+8$}. We obtain a bound on the expected number of generations by repeating the above arguments if necessary with another period of $cn/2\ln(n/2)$ generations and noting that in expectation at most $1+o(1)$ periods are sufficient. The bound on the number of function evaluations \newedit{follows} by multiplying with $\mu$.
\end{appendixproof}

% \section{Spanning Tree Problem}

% \section{Negative results?}

% \section{Experiments?}

\section{Conclusions}

We have introduced new mathematical tools and techniques for the runtime analysis
of NSGA-III on many-objective optimization problems. These techniques were 
demonstrated on the standard benchmark functions \mLOTZ, \mOMM and \mCOCZ to show
how the upper bounds on the expected running time of the algorithms scale \neweditx{with the problem dimension for a constant number of objectives}. Our methods also provide ways
to parameterize \neweditx{NSGA-III} so that these bounds on the performance can be guaranteed. 
To our knowledge, this is the first runtime analysis of NSGA-III for more than
three objectives. We hope that the techniques developed in this paper will serve as \newedit{ 
a stepping stone} for \neweditx{future analyses} of NSGA-III and 
will provide a better understanding of the capabilities of this algorithm. 
As future work, we plan to investigate the performance of NSGA-III on combinatorial 
multi-objective problems, such as computing approximation guarantees for multiobjective minimum spanning trees \cite{DBLP:journals/eor/Neumann07}. 

\ifreview\else
\ifarxiv
\section*{Acknowledgement} 
\else
\begin{acks}
\fi
% put the acknlowedgements here
Frank Neumann has been supported by the Australian Research Council through grant FT200100536.
\ifarxiv\else
\end{acks}
\fi
\fi

\appendix

\section{Useful results}

%To prepare for the proof of Lemma~\ref{lem:Same-Reference} we first show the following lemma with trigonometric identities: \todo{appendix?}
\begin{lemma}
\label{lem:Trigonometry}
    Let $x \in (0,1]$. Then $2\arcsin(x/2) < \arcsin(x)$.
\end{lemma}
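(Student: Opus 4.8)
The plan is to reduce the inequality to a statement about the monotonicity of an auxiliary function. Define $f(x) := \arcsin(x) - 2\arcsin(x/2)$ on $[0,1]$; since both $\arcsin$ terms vanish at $0$, we have $f(0)=0$, and it then suffices to show that $f$ is strictly increasing on $(0,1)$ in order to conclude $f(x) > 0$ for all $x \in (0,1]$.

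First I would differentiate, using $\tfrac{d}{dx}\arcsin(u) = u'/\sqrt{1-u^2}$, which gives
\[
f'(x) = \frac{1}{\sqrt{1-x^2}} - \frac{1}{\sqrt{1 - x^2/4}}.
\]
The key observation is that for $x \in (0,1)$ we have $1 - x^2 < 1 - x^2/4$, hence $\sqrt{1-x^2} < \sqrt{1-x^2/4}$, and therefore $f'(x) > 0$. Thus $f$ is strictly increasing on $(0,1)$, and by continuity of $f$ on $[0,1]$ we get $f(x) > f(0) = 0$ for every $x \in (0,1]$, which is exactly the claim. The only mild subtlety is at $x=1$, where $f'$ is not defined; there the conclusion follows from continuity of $f$ at the endpoint together with strict monotonicity on the open interval.

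Alternatively, and perhaps cleaner since it avoids derivatives, I would substitute $\alpha := \arcsin(x/2)$, so that $\sin\alpha = x/2$ and $\alpha \in (0, \pi/6]$ for $x \in (0,1]$. Both $2\alpha \in (0,\pi/3]$ and $\arcsin(x) \in (0,\pi/2]$ then lie in $[0,\pi/2]$, where $\sin$ is strictly increasing, so the desired inequality $\arcsin(x) > 2\alpha$ is equivalent to $x > \sin(2\alpha)$. Expanding $\sin(2\alpha) = 2\sin\alpha\cos\alpha = x\cos\alpha$ and noting that $\cos\alpha < 1$ for $\alpha \in (0,\pi/6]$ yields $\sin(2\alpha) = x\cos\alpha < x$, which establishes the inequality.

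There is essentially no hard step here: both routes are short, and the main point to get right is the direction of the inequality between the two radicals (calculus route) or the confirmation that all relevant angles stay in the range $[0,\pi/2]$, where $\sin$ and $\arcsin$ are monotone (trigonometric route), so that passing between $\arcsin$-inequalities and $\sin$-inequalities is legitimate.
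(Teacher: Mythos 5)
Both of your routes are correct. Your second (trigonometric) route is essentially the paper's own argument viewed from the other side: the paper uses the addition formula $\arcsin(y)+\arcsin(z)=\arcsin\bigl(y\sqrt{1-z^2}+z\sqrt{1-y^2}\bigr)$ with $y=z=x/2$ to write $2\arcsin(x/2)=\arcsin\bigl(x\sqrt{1-x^2/4}\bigr)$ and then invokes strict monotonicity of $\arcsin$, while you write $\sin(2\alpha)=x\cos\alpha<x$ with $\alpha=\arcsin(x/2)$ and invoke strict monotonicity of $\sin$ on $[0,\pi/2]$; since $\cos\alpha=\sqrt{1-x^2/4}$, these are the same computation, and you correctly handle the one point that needs care (all angles staying in $[0,\pi/2]$, which the paper addresses by restricting to $y,z\in[-0.5,0.5]$). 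Your first (calculus) route is genuinely different and also sound: the derivative computation $f'(x)=1/\sqrt{1-x^2}-1/\sqrt{1-x^2/4}>0$ on $(0,1)$ is correct, the sign comparison of the radicals goes the right way, and you rightly flag that the endpoint $x=1$ is covered by continuity rather than differentiability. The calculus route avoids any identity manipulation at the cost of a derivative and a boundary remark; the trigonometric route is self-contained and matches the paper. No gaps in either.
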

\begin{proof}
For $a,b \in \mathbb{R}$ we use the well known trigonometric identity $\sin(a+b)=\sin(a)\cos(b)+\cos(a)\sin(b)$.
For $y,z \in [-1,1]$ we obtain by using $\cos(y)=\sqrt{1-\sin^2(y)}$
\begin{align*}
& \sin(\arcsin(y)+\arcsin(z)) \\
&=\sin(\arcsin(y))\cos(\arcsin(z))+\cos(\arcsin(y))\sin(\arcsin(z))\\
&=y\sqrt{1-z^2}+z\sqrt{1-y^2}.
\end{align*}
And hence if additionally $y,z \in [-0.5,0.5]$ (since then $-\pi/2 \leq \arcsin(y)+\arcsin(z) \leq \pi/2$) 
%$y,z \in [-1/\sqrt{2},1/\sqrt{2}]$
%and $-1 \leq x \cdot \sqrt{1-y^2} + y \cdot \sqrt{1-x^2} \leq 1$)
\[
\arcsin(y) + \arcsin(z) = \arcsin(y \cdot \sqrt{1-z^2} + z \cdot \sqrt{1-y^2})
\]
which implies for $x \in (0,1]$ (where $y=z=x/2$)
\begin{align*}
2\arcsin(x/2) &= \arcsin((x/2) \sqrt{1-x^2/4}+(x/2) \sqrt{1-x^2/4})\\ 
&= \arcsin(x \sqrt{1-x^2/4}) < \arcsin(x)
\end{align*}
where the inequality holds since $\arcsin$ \neweditx{strictly increases} on $\neweditx{(}0,1]$. 
\end{proof}

\balance
\bibliographystyle{abbrvnat}
\bibliography{references}

\end{document}